\pdfoutput=1
\documentclass[11pt]{article}
\usepackage{authblk}
\usepackage[top=1.in, bottom=1.in, left=1.in, right=1.in]{geometry}
\usepackage{amstext,amsmath,amsthm,amssymb,amsfonts}
\usepackage{bm, yhmath}
\usepackage{bbm, comment}

\usepackage{wrapfig}
\usepackage[utf8]{inputenc}
\usepackage{graphicx} 
\usepackage{subcaption}
\usepackage{xcolor}
\usepackage{makecell}
\usepackage{lipsum}
\usepackage{url}
\usepackage[ruled]{algorithm2e} % For algorithms
\usepackage[shortlabels]{enumitem}

\usepackage{hyperref}     
\usepackage{cleveref}

\usepackage{tabulary}
\usepackage{booktabs}
\usepackage{multirow}

\usepackage{natbib}
\setcitestyle{authoryear} %Citation-related commands

\usepackage{environ, etoolbox, fancyvrb, ifthen, kvoptions}
\usepackage[bibliography=common]{apxproof}

\newtheorem*{theorem*}{Theorem}
\newtheorem{theorem}{Theorem}[section]
\newtheorem{assumption}[theorem]{Assumption}
\newtheorem{remark}[theorem]{Remark}

\newtheorem{definition}[theorem]{Definition}

\newtheoremrep{lemma}[theorem]{Lemma}
\newtheoremrep{claim}[theorem]{Claim}
\newtheoremrep{proposition}[theorem]{Proposition}

\usepackage{amsmath,amsfonts,bm}

\def\eqref#1{equation~\ref{#1}}
\def\1{\bm{1}}

\DeclareMathAlphabet{\mathsfit}{\encodingdefault}{\sfdefault}{m}{sl}
\SetMathAlphabet{\mathsfit}{bold}{\encodingdefault}{\sfdefault}{bx}{n}

\newcommand{\E}{\mathbb{E}}

\DeclareMathOperator{\sign}{sign}

\newcommand{\nn}{\textsc{NN}}

\newcommand{\mi}{\textsc{MI}}
\newcommand{\pmi}{\textsc{PMI}}

\newcommand{\RelScore}{\text{PMIScore}\xspace}
\newcommand{\RelS}{\text{PMIS}\xspace}

\title{\RelScore: An Unsupervised Approach to Quantify Dialogue Engagement}
\author[1]{Yongkang Guo\thanks{Yongkang Guo and Zhihuan Huang contributed equally to this work.}}
\author[1]{Zhihuan Huang\protect\footnotemark[1]}
\author[1]{Yuqing Kong\thanks{Corresponding author: \texttt{yuqing.kong@pku.edu.cn}}}
\affil[1]{CFCS, School of Computer Science, Peking University, China}
\date{}

\begin{document}

\maketitle

\begin{abstract}
High dialogue engagement is a crucial indicator of an effective conversation. A reliable measure of engagement could help benchmark large language models, enhance the effectiveness of human-computer interactions, or improve personal communication skills. However, quantifying engagement is challenging, since it is subjective and lacks a ``gold standard''. This paper proposes PMIScore, an efficient unsupervised approach to quantify dialogue engagement. It uses pointwise mutual information (PMI), which is the probability of generating a response conditioning on the conversation history. Thus, PMIScore offers a clear interpretation of engagement. As directly computing PMI is intractable due to the complexity of dialogues, PMIScore learned it through a dual form of divergence. The algorithm includes generating positive and negative dialogue pairs, extracting embeddings by large language models (LLMs), and training a small neural network using a mutual information loss function. We validated PMIScore on both synthetic and real-world datasets. Our results demonstrate the effectiveness of PMIScore in PMI estimation and the reasonableness of the PMI metric itself.
\end{abstract}

\section{Introduction}
You sink into the couch, drained. “My presentation was a disaster.” Your partner barely glances over. “Yeah, traffic was awful today.”

Scenes like this happen every day—two people exchanging words, but not truly connecting. Sometimes one person answers a completely different question; other times, they respond with generic, safe phrases such as “I know how you feel” or “That’s interesting,” which keep the conversation going without real engagement.

Understanding and measuring engagement in natural language is more than a philosophical question. A reliable measure could help benchmark large language models, improve tools for business communication, track the quality of personal or professional interactions, or even analyze high-stakes scenarios like presidential debates \citep{see2019makes, yi2019towards, guo2018topic}. However, quantifying engagement is subjective, often lacks a gold standard. Human annotations are noisy, expensive, and inconsistent. Directly asking a language model to score engagement is also unstable, hard to interpret, and can feel like a black-box solution \citep{ferron2023meep}.

In this paper, we focus on two research questions:

\paragraph{RQ1:} Can we propose a metric to quantify the engagement of a dialogue?

\paragraph{RQ2:} Do we have an efficient, automatic approach to calculate the metric without human annotations?

To address the questions, this paper proposes a pointwise mutual information (PMI) approach to measure engagement between conversational context and responses. PMI has a clear interpretation: it quantifies how much more (or less) likely a response is given the conversation context compared to its baseline probability. 
\begin{equation}\label{pmi}
\begin{split}
\mathrm{PMI}(\text{response};\text{context})&=\log \frac{\Pr[\text{response},\text{context}]}{\Pr[\text{response}]\Pr[\text{context}]}\\
&=\log \frac{\Pr[\text{response}|\text{context}]}{\Pr[\text{response}]}
\end{split}
\end{equation}

In other words, if the response is highly relevant to the context, the PMI is positive; if the response is unrelated—or even counterintuitively discouraged by the context, as in the ``traffic'' example—the PMI can be negative. Generic, low-engagement responses, while valid in isolation, tend to be predictable regardless of the context, resulting in PMI scores close to zero.

This formulation naturally distinguishes true engagement from mere relevance or repetition. While a response that simply copies the context might seem ``relevant'', it adds zero new information (low PMI). In contrast, meaningful semantic reconstruction—which reflects active listening, a key component of engagement \citep{rogers1957active}—confirms understanding without parroting, thereby yielding a high PMI score. Thus, PMIScore penalizes generic or lazy responses while rewarding substantive interaction.

Computing PMI requires estimating the joint distribution of conversation histories and responses—a task that becomes intractable in high-dimensional spaces. Unlike simple, low-dimensional signal systems, conversations are vast and complex, making naïve estimation impossible.

As large language models are based on the conditional probabilities to generate the texts, one might try using their log-probabilities directly. But this has limitations: those probabilities reflect the model’s general distribution over all contexts, not conversation-specific dynamics. They are also overly sensitive to surface-level context rather than meaningful exchange. Short texts are preferable by this method, even if they have the same semantic meaning.

There is a better path. Mutual information can be expressed as the KL-divergence between the joint distribution and the product of marginals: if context and response were independent, the two distributions would coincide. Pointwise mutual information (PMI) (Equation~\ref{pmi}) corresponds to the log-ratio of these probabilities.  

The dual form of KL-divergence provides a practical way to estimate this ratio. Instead of directly computing probabilities, we compare expectations of a function under the two distributions. If the distributions are identical, these expectations match; if not, there exists a function that maximizes their discrepancy. The optimal function corresponds exactly to the likelihood ratio, i.e., the PMI. In practice, expectations can be estimated via sampling:  
\begin{enumerate}
    \item \textbf{Joint samples:} context--response pairs from the same dialogue.  
    \item \textbf{Marginal samples:} contexts and responses drawn independently from different dialogues.  
\end{enumerate}

Viewed from another perspective, this setup resembles a classification problem. Positive samples are true context--response pairs, while negative samples are mismatched pairs. Training a discriminator to separate positives from negatives yields an approximation to PMI, quantifying how much mutual information a response shares with its conversational context. The key difference from standard classification is that our goal is not merely to minimize misclassification error, but to learn a score that directly corresponds to PMI. Achieving this requires an objective function derived from the dual form of KL-divergence.  

Large language models further ease this training. Their pretrained embeddings already encode rich semantic and contextual structure. By fine-tuning an LLM---or training a smaller classifier on top of its embeddings---we can specialize the model to this discriminative task. The fine-tuned model emphasizes conversational relevance over general fluency, aligning its internal representations with the objective of estimating mutual information.  
The detailed flow is shown in \cref{fig:flow}.

\begin{figure*}
    \centering
    \includegraphics[width=1\linewidth]{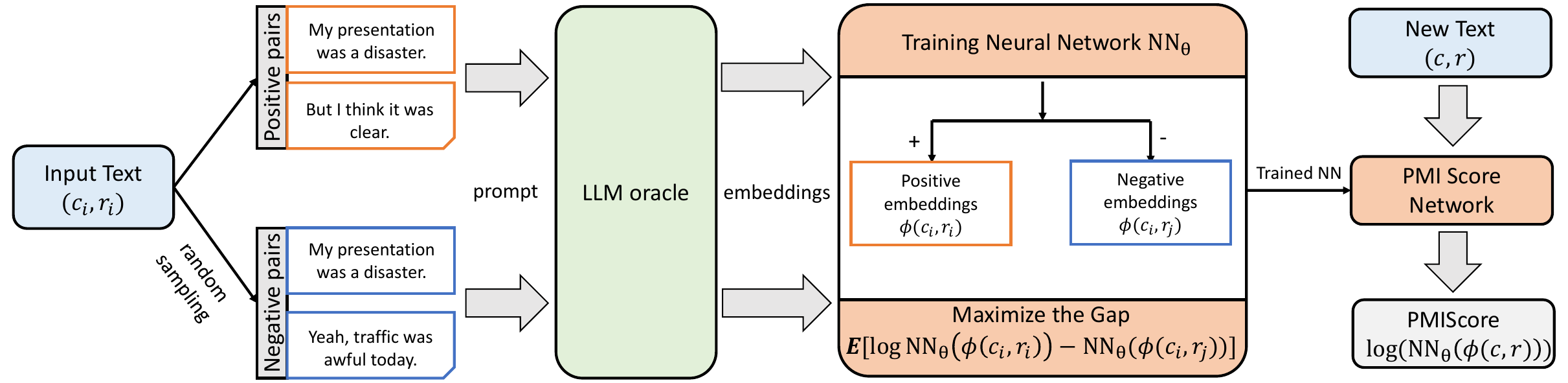}
    \caption{Flow of \RelScore. }
    \label{fig:flow}
\end{figure*}

\paragraph{Our contribution} We introduce a novel method for measuring dialogue engagement, grounded in information theory. Our approach is based on Pointwise Mutual Information (PMI), which quantifies how much new information the response contributes to the context. The method assigns a score of zero when the context and response are unrelated. For instance, if a response doesn’t offer any new or relevant insights—such as a generic, off-topic answer—it receives a score of zero, indicating no engagement. In contrast, when the response is highly relevant and aligned with the context, the score is positive, reflecting strong engagement. The score can turn negative if the response provides counterintuitive or misleading information that contradicts the context. This allows the metric to dynamically assess the quality of the response: it increases when additional context enhances the relevance of the response, but decreases when irrelevant or conflicting information detracts from it. 

We also developed a practical, unsupervised method to calculate this measure, leveraging powerful language models (LLMs) and the duality format of mutual information. We proved that with some proper assumptions, our algorithm will converge to the PMI when the sample size goes to infinity. 

Finally, we validated our approach using both synthetic datasets (created for testing) and real-world data. First, we showed that our method produced a more accurate estimation of PMI compared to previous methods. To establish a ground truth for PMI, we constructed synthetic datasets derived from various joint distributions. The results showed that our method has the lowest mean square error and the highest Pearson correlation with the true PMI. Furthermore, comparisons with state-of-the-art baselines, including the prompt-based metric MEEP \citep{ferron2023meep}, demonstrate that PMIScore achieves superior performance on both synthetic and empirical benchmarks, effectively capturing dialogue engagement. We used the multilingual dialogue collections from DSTC-11 Track~4 to construct a training set. Then we evaluated the correlation between \RelScore and the human annotated relevance scores. Our results indicated that \RelScore exhibited a high correlation with the relevance score, thereby serving as a close and reliable metric for dialogue engagement. The source code and datasets are available at \url{https://github.com/nbdhhzh/PMIScore.git}.
\section{Related work}

\paragraph{Mutual Information and Dual Form}
We use the $f$-divergence technique introduced by \cite{nguyen2009surrogate,nguyen2010estimating}. \citet{nguyen2009surrogate} extend the correspondence between surrogate loss functions and $f$-divergences in binary classification. \citet{nguyen2010estimating} then develop a method to estimate the $f$-divergence between two probability distributions via a non-asymptotic variational characterization.

Using related ideas, \citet{kong2018water} solve learning, co-training, and forecast elicitation without verification by maximizing mutual information between agents. \citet{belghazi2018mutual} introduce MINE for estimating mutual information in high-dimensional data through the dual form of KL-divergence. \citet{zheng2024proper} train Bayesian models on embedded datasets to compute mutual information. In addition, \citet{schoenebeck2020learning} propose a family of scoring rules based on $f$-divergence to incentivize truthful peer prediction.

We extend the method for estimating mutual information to the field of natural language processing using large pre-trained models. Additionally, we first propose using mutual information as a metric to measure dialogue engagement.

\paragraph{Benchmark Dialogue Systems}

Many studies are dedicated to proposing metrics for measuring the capability of dialogue systems. Previous automatic benchmark includes BLEU (\cite{papineni2002bleu}), ROUGE (\cite{lin2004rouge}), and BERTSCORE (\cite{zhang2019bertscore}). 

Nowadays, many researches explores the large language models (LLMs) to benchmark the dialogues. \cite{liu2023g} introduces G-EVAL to evaluate summarization and dialogue response generation. \cite{fu2023gptscore} use GPTSCORE to test various evaluation aspects with zero-shot instruction. \cite{ferron2023meep} propose a method to measure the engagement of dialogues by prompting the large language models. 

Previous methods typically use prompt engineering to directly obtain a score from LLMs as a metric. Our approach utilizes an unsupervised fine-tuning method that uses mutual information as the metric, which offers better interpretability.

\paragraph{Mutual Information in Dialogues}

Numerous studies investigate how mutual information can be utilized to enhance the effectiveness of machine learning models, especially in the NLP system. \cite{santra2022representation} proposes a novel Discourse Mutual Information (DMI) loss function for pretraining dialog representation models, which capture conversational structure and response prediction uncertainty.  \cite{zhang2023dialogmi} introduce a dialogue generation framework Dialog-MI which use the mutual information loss to generate more informative dialogue. \cite{ren2023c} uses the conditional pointwise mutual information (C-PMI) to  measure the turn-level interaction a dialogue systems. \cite{su2024joint} train both natural language understanding and generation systems by maximizing the mutual information between two components. \cite{xu2024benchmarking} also use a mutual information estimator to benchmark the judgment, especially for LLMs.

These researches often use mutual information as a training loss to improve the capability of a dialogue system, whereas we primarily use it as a metric to benchmark a dialogue system. Furthermore, previous methods typically used the upper bound of mutual information as an estimator, whereas we used its dual form, which allows for more accurate estimation of mutual information.
\section{Preliminaries}
In this section, we introduce some basic knowledge of mutual information that will be used in our relevance score and alogrithm. We focus on the Shannon mutual information in the rest of our paper. We also consider a more general class of mutual information called $f$-mutual information. Similarly, we can also define the $f$-divergence and its duality. The details are deferred to the appendix.

\begin{definition}[Mutual Information]\label{def:mif}
Given two random variables $X,Y$, the (Shannon) mutual information between $X$ and $Y$ is defined as
\[
    \mi(X;Y)=\sum_{x,y}\Pr(x, y)\log \left(\frac{\Pr(x,y)}{\Pr(x)\Pr(y)}\right).
\]

\end{definition}

Mutual information (MI) quantifies the statistical dependence between two random variables \(X\) and \(Y\). To analyze the association between specific outcomes (e.g., two texts), we use the \emph{pointwise mutual information (PMI)}.  

\begin{definition}[Pointwise Mutual Information]
For realizations \(X = x\) and \(Y = y\), the pointwise mutual information is defined as  
\[
    \pmi(x;y) = \log \frac{\Pr(x,y)}{\Pr(x)\Pr(y)} .
\]  

In other words, \(\pmi(x;y)\) measures how much more likely the pair \((x,y)\) is to occur together than if \(x\) and \(y\) were independent.  

Moreover, mutual information is the expectation of PMI:  
\[
    \mi(X;Y) \;=\; \E_{(x,y) \sim \Pr(X,Y)} \big[ \pmi(x;y) \big].
\]
\end{definition}

\paragraph{Properties of PMI:}
\begin{itemize}
\item \textbf{Symmetry} $\pmi(x;y)=\pmi(y;x)$.
    \item \textbf{Vanishing under independence:} $\pmi(x;y)=0$ if $X$ and $Y$ are independent.
    \item \textbf{Chain rule:} $\pmi(x,y;z) = \pmi(x;z) + \pmi(y;z \mid x)$, with conditional $\pmi(y;z \mid x) = \log \frac{\Pr(y,z \mid x)}{\Pr(y\mid x)\Pr(z\mid x)}$.
    \item \textbf{Monotonicity:} $\pmi(x,y;z) \ge \pmi(x;z)$ if $\pmi(y;z \mid x)\ge 0$.
    \item \textbf{Sign:} $\pmi(x;y)>0$ if $x$ and $y$ co-occur more than expected; $\pmi(x;y)<0$ if less.
\end{itemize}

\begin{definition}[Dual Form of Mutual Information]
The \emph{KL-divergence} between two distributions $\mathbf{p}$ and $\mathbf{q}$ is
\[
d_{\mathrm{KL}}(\mathbf{p}\,\|\,\mathbf{q}) \;=\; \E_{\mathbf{p}}\!\left[\log \frac{\mathbf{p}}{\mathbf{q}}\right].
\]
It admits the \emph{dual form}
\[
d_{\mathrm{KL}}(\mathbf{p}\,\|\,\mathbf{q}) \;=\; \sup_{D \in \mathcal{D}} \; \E_{\mathbf{p}}[\log D + 1] \;-\; \E_{\mathbf{q}}[D],
\]
where $\mathcal{D}$ denotes the set of measurable functions $D:\mathcal{X}\to \mathbb{R}_{>0}$ and $\mathcal{X}$ denotes the underlying sample space of $\mathbf{p}$ and $\mathbf{q}$. Since mutual information is the KL-divergence between the joint distribution $\Pr(X,Y)$ and the product of marginals $\Pr(X)\Pr(Y)$, we obtain the \emph{dual form of mutual information}:
\begin{equation}\label{eq:dual}
\begin{split}
\mi(X;Y) \;
=\; \sup_{D \in \mathcal{D}} \; &\big(\E_{(x,y)\sim \Pr(X,Y)}\!\big[\log D(x,y) + 1\big] \;\\
&-\; \E_{(x,y)\sim \Pr(X)\Pr(Y)}\!\big[D(x,y)\big]\big).
\end{split}
\end{equation}

Intuitively, this variational form interprets mutual information as the maximal advantage of a discriminator $D$ in distinguishing dependent pairs from independent ones. At the optimum,
\[
D^\star(x,y) \;=\; \frac{\Pr(x,y)}{\Pr(x)\Pr(y)} \;=\; e^{\pmi(x;y)},
\]
which recovers the pointwise mutual information.
\end{definition}

\paragraph{Why use the dual form?} 
The direct definition of mutual information requires access to the joint probability $\Pr(x,y)$ and the marginals $\Pr(x), \Pr(y)$. 
In high-dimensional or continuous spaces, these distributions are intractable to compute or estimate accurately.

The dual form avoids this difficulty. Instead of explicitly estimating probability densities, it reframes mutual information as a variational optimization problem. Here, we only need \emph{samples} from the joint distribution $\Pr(X,Y)$ (obtained from paired data) and from the product distribution $\Pr(X)\Pr(Y)$ (obtained by resampling or shuffling).  

This variational perspective is the foundation of mutual information estimators such as \emph{MINE} (Mutual Information Neural Estimation) and related approaches.

\section{Framework of \RelScore}
In this section, we introduce the \textbf{\RelScore} framework, which assigns a relevance score to each dialogue pair. A detailed description of the implementation is provided in the experiments section.  

Conceptually, \RelScore estimates the pointwise mutual information (PMI) between a dialogue context and its response. Suppose we have $n$ samples drawn from the joint distribution of contexts and responses, denoted as
\[
\{(c_i, r_i)\}_{i=1}^n \sim \Pr(C,R).
\]
\RelScore aims to quantify how much information the response $r_i$ carries about its corresponding context $c_i$ compared to responses drawn independently from the marginal distribution.

\begin{comment}
\begin{algorithm}
    \caption{Relevance Score Function}
    \label{alg:rel}
    \KwIn{A set of texts $\{(c_i,r_i)\}_{i=1}^n$}
    \KwOut{A score function $\RelS(c,t)$}
    \Begin{
        Calculate the embeddings $e_i=\ell(c_i,r_i)$ from a LLM oracle $\ell$\;
        optimize a neural network $\nn_\theta$ by $\frac{1}{n}\sum_i \partial f(\nn_\theta(l(c_i,r_i)))-\frac{1}{n(n-1)}\sum_{i\ne j}f^*(\nn_\theta(l(c_i,r_j)))$\;
        Calculate the score $\RelS(c,t)=f(\nn_\theta(\ell(c,t)))$\;
    return result\;
  }
\end{algorithm}
\end{comment}

\subsection{Data Preparation}

There are $n$ data points $(c_i,r_i)$ in the corpus, where $c_i\in C$ is the context of a dialogue and $r_i\in R$ is the response text given $c_i$. For example, the context $c_i$ usually contains the previous rounds of a dialogue and a question from another person, while $r_i$ is the answer to that question. 

The original dataset $D^+=\{(c_i,r_i)\}_i$ is regarded as the \textit{postive pairs}. and we construct the \textit{negative pairs} $D^-=\{(c_i,r_j)\}_{i,j}$ by randomly sampling some response texts $r_j$ from the dataset for each context $c_i$. As the response is meaningful but not to the context, we regard them as negatively relevance.

\subsection{LLMs-Based Embeddings}

Large Language Models (LLMs) are highly effective and powerful generative models. They are typically composed of two components: encoder blocks and decoder blocks. In traditional generation tasks, an LLM first receives an input. This input is processed by the encoder network to produce a vector (also called an embedding), which serves as a compressed representation of the input's information. Subsequently, this vector is passed through the decoder component, which then generates the response word by word in an autoregressive manner.

Given any pair of texts $d\in C\times R$, we use the encoder blocks of LLMs to extract the semantic meaning of the text. An LLM oracle is defined by $\phi: P\times C\times R\to \mathbb{R}^m$. It takes a task description (also called Prompt) $p$ and a pair of data $d$ as input and outputs a $m$-dimensional embedding $e$ \cite{jiang2023scaling}.

\subsection{Training the Small Neural Network}

Building on the \RelScore framework introduced above, we train a small neural network $\nn_\theta: \mathbb{R}^m \to \mathbb{R}$ to compute the final relevance score for a given dialogue pair, where $\theta$ denotes the network parameters. We apply a softcap operator as a form of regularization to enforce Lipschitz continuity and improve robustness against outliers (e.g., false negatives in sampling).
The network is implemented as a multi-layer perceptron (MLP), making it significantly smaller and more efficient than full-scale LLMs.

Let $D^+$ denote the set of positive (context, response) pairs and $D^-$ denote negative pairs. To align the network outputs with pointwise mutual information, we define a mutual information dual form–based loss function:
\begin{align*}
    L_{\mi}(\theta) = - \Big( \E_{d \in D^+} \log \nn_\theta(\phi(p^*, d)) - \E_{d \in D^-} \nn_\theta(\phi(p^*, d)) \Big),
\end{align*}
where $p^*$ is a predetermined prompt and $\phi(\cdot)$ represents the input feature embedding. Details of the prompt and feature design are provided in the appendix. 

Unlike traditional pairwise learning methods, this loss is asymmetric with respect to positive and negative pairs. This formulation directly corresponds to the dual-form of mutual information in Equation~(\ref{eq:dual}), specifically the NWJ bound proposed by \citet{nguyen2010estimating}, with the only difference being that the constant term $+1$ from the original dual form is omitted. Since this constant does not affect the optimization, it can safely be dropped. As in the dual-form interpretation, this loss encourages the network to assign high scores to true context–response pairs while lowering the scores of mismatched pairs, effectively approximating the pointwise mutual information.

\subsection{New Pairs Evaluation}

After we trained the optimal parameters $\theta^*=\arg\min L_{\mi}(\theta)$, we will use it to evaluate any pair of dialogues. Given the dialogue pair $c,r$, its \RelScore is given by:

$$\RelS(c,r)=\log \nn_{\theta^*}\left(\phi\left(p^*,(c,r)\right)\right).$$

\begin{comment}
    
\paragraph{Discussion}

In practice, the information conveyed by sentences always depends on the context. For instance, the same words can carry different amounts of information during a daily conversation compared to a technical presentation. Different people also have different ways of expressing the same meaning. Therefore, we use $\mathcal{F}$ to represent all the current contextual information, and we measure the conditional mutual information $MI^f(X;Y|\mathcal{F})$.

In our experiments, we will assume that the same dataset implies the similar context. Therefore, we measure the conditional mutual information $MI^f(X;Y|\mathcal{D})$ based on the dataset $\mathcal{D}$. For simplicity, we'll omit the condition $\mathcal{D}$.

\end{comment}
\section{Theoretical Results}

In this section, we prove that our algorithm will learn the PMI of each pair. Thus it is a proper metric for the relevance score. The main proof idea is borrowed from \citet{belghazi2018mutual}. Due to limited space, we defer the proofs to \Cref{apx:proof}. 

We first make two assumptions.

\begin{assumption}[Semantic Embedding Assumption]
    The LLM embedding $\phi(c,r)$ can extract the semantic meaning. That is, for any $c_1, c_2$ and $r_1, r_2$ with the same meaning, $\phi(c_1,r_1)=\phi(c_2,r_2)$.
\end{assumption}

\begin{assumption}[I.I.D. Samples Assumption]
    The data points $\{(c_i, r_i)\}_i$ are independent samples drawn from a joint distribution $\Pr(C=c,R=r)$.
\end{assumption}

Notice that given the independent samples assumtion, there are two underlying random variables $C,R$ of the context $c$ and response $r$ seperately. Now we state our main theorem which shows minimize our loss function $L_{\mi}$ is equivalent to estimate the mutual information of $C,R$. In addition, the minimizer corresponds to the PMI.

% \begin{theorem}
%     Let $NN_{\theta^*}(l(c,r))=\pmi(c,r)=\frac{\Pr(X=c,Y=r)}{\Pr(X=c)\Pr(Y=r)}$, then $\theta^*$ is the minimizer of 

%     $$\min_{\theta}L_{\mi}(\theta),$$ and the expected minimum is the opposite of mutual information $\E_{X,Y}L_{\mi}(\theta^*)=-\mi(X;Y)$.
% \end{theorem}

\begin{theorem}
\label{thm:convergence}
Under the assumptions described above, as the number of samples $n \to \infty$, the \RelScore converges to the pointwise mutual information:
\[
\RelScore(c,r) \;\longrightarrow\; \pmi(c,r).
\]
\end{theorem}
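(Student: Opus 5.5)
The proof has two parts. First, characterize the pointwise maximizer of the population objective. Second, show that the empirical minimizer converges to it, following the consistency argument of \citet{belghazi2018mutual}.

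\textbf{Step 1 (population problem).} By the Semantic Embedding Assumption, $\phi(p^*,\cdot)$ depends on $(c,r)$ only through its meaning. We may therefore identify pairs with their embeddings and treat $D=\nn_\theta\circ\phi$ as an arbitrary positive function of $(c,r)$, at least when the network class is rich enough. The population loss is
\[
L(D) \;=\; -\Big(\E_{\Pr(C,R)}[\log D(c,r)] - \E_{\Pr(C)\Pr(R)}[D(c,r)]\Big).
\]
Writing both expectations as sums against $\Pr(c)\Pr(r)$ gives
\[
L(D)=-\sum_{c,r}\Pr(c)\Pr(r)\big(\rho(c,r)\log D(c,r)-D(c,r)\big),
\qquad \rho(c,r)=\frac{\Pr(c,r)}{\Pr(c)\Pr(r)}.
\]
For each fixed $(c,r)$, the map $t\mapsto \rho\log t - t$ is strictly concave on $t>0$, with its unique maximum at $t=\rho$. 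Hence the minimizer satisfies $D^\star=\rho=e^{\pmi}$ on the support of $\Pr(C)\Pr(R)$. The minimum value is $-\mi(C;R)+1$, which matches the dual form \eqref{eq:dual} with the constant $+1$ dropped. Taking logarithms, $\log D^\star(c,r)=\pmi(c,r)$.

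\textbf{Step 2 (empirical to population).} Under the I.I.D. Samples Assumption, positive pairs are i.i.d. draws from $\Pr(C,R)$. Negative pairs $(c_i,r_j)$ with $i\neq j$ have marginal law $\Pr(C)\Pr(R)$, and the average over them is a U-statistic. By the strong law of large numbers, together with the law of large numbers for U-statistics, the empirical loss $L_n(\theta)$ converges to $L(\theta)$ almost surely for each fixed $\theta$.

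To pass to minimizers we need two assumptions, the same ones \citet{belghazi2018mutual} use:
\begin{itemize}
\item $\theta$ ranges over a compact set, and $\nn_\theta$ is continuous in $\theta$ and bounded in $[a,b]$ with $a>0$. The softcap provides exactly this boundedness.
\item The network family is rich enough to approximate $\rho$. By universal approximation this holds in the finite or compactly supported case, possibly after truncating $\rho$ to $[a,b]$ with $a\to0$ and $b\to\infty$ slowly.
\end{itemize}
Under these conditions the losses form a Glivenko–Cantelli class, so $\sup_\theta|L_n(\theta)-L(\theta)|\to0$ almost surely. A standard argmin-consistency argument then gives $L(\theta_n^*)\to\inf_\theta L(\theta)$, which is within $\epsilon$ of $L(D^\star)$.

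\textbf{Step 3 (from values to pointwise convergence).} Near-optimal loss does not automatically give pointwise closeness, so this step needs its own argument. For fixed $(c,r)$, let $g(t)=t-\rho\log t$. The excess loss is
\[
L(D)-L(D^\star)=\sum_{c,r}\Pr(c)\Pr(r)\,\big[g(D)-g(\rho)\big],
\]
and every summand is nonnegative. On $[a,b]$, $g$ is strongly convex, with $g(t)-g(\rho)\ge \frac{\rho}{2b^2}(t-\rho)^2$. So the excess loss controls a weighted $L^2$ distance between $D_n$ and $\rho$. In the discrete setting, every $(c,r)$ with positive probability has positive weight, so the weighted $L^2$ bound forces $D_n(c,r)\to\rho(c,r)$ at every such point. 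Continuity of $\log$ on $[a,b]$ then gives $\RelScore(c,r)=\log D_n(c,r)\to\pmi(c,r)$.

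The main obstacle is the tension in Step 2 between two requirements. Uniform convergence needs $D$ bounded away from $0$ and $\infty$, while approximating $\rho$ may need arbitrarily large or small values. I would handle this by assuming $\rho$ is bounded on the support, which holds in the finite-vocabulary case, and choosing the softcap range to contain it.
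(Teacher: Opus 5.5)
Your proposal is correct under the conditions you state explicitly: discrete support, and $\rho$ lying inside the softcap range. Steps 1--2 match the paper. Step 1 is the optimizer $D^\star=e^{\pmi}$ that the paper records alongside the dual form. Step 2 is what the paper's proof gets from the approximation and estimation lemmas of \citet{belghazi2018mutual}; your uniform law of large numbers for the positive average and the U-statistic over negatives is the content of the estimation lemma.

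The real difference is how you pass from convergence of the objective value to convergence of the score itself. The paper argues that $\E[\text{PMIScore}]\to\mi(C;R)$, that $\E[\text{PMIScore}]\le\mi(C;R)$ with equality only at $\pmi$, and then concludes pointwise convergence. That is a qualitative equality-case argument with two weaknesses:
\begin{itemize}
\item It needs an extra compactness-and-uniqueness step before it says anything about near-optimizers.
\item The inequality only holds under the normalization $\E_{\Pr(C)\Pr(R)}[D]\le 1$. By Jensen, $\E_{\Pr(C,R)}[\log D]\le\mi(C;R)+\log\E_{\Pr(C)\Pr(R)}[D]$, and the paper never checks this normalization.
\end{itemize}
Your Step 3 instead uses strong convexity of $t\mapsto t-\rho\log t$ to obtain $L(D)-L(D^\star)\ge\frac{1}{2b^2}\sum_{c,r}\Pr(c,r)\,(D(c,r)-\rho(c,r))^2$. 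This needs no normalization. It also gives an explicit rate $|D_n(c,r)-\rho(c,r)|\le b\sqrt{2\,(L(D_n)-L(D^\star))/\Pr(c,r)}$ at every atom of the joint law. The cost is that you rely visibly on the upper bound $b$ and on discreteness, which the paper's shorter sketch leaves implicit.

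Two caveats apply to both arguments:
\begin{itemize}
\item Treating $\nn_\theta\circ\phi$ as an arbitrary function of $(c,r)$ requires $\phi$ to separate pairs with different $\rho$. That is the converse of the invariance stated in the Semantic Embedding Assumption.
\item Exact convergence, rather than closeness of order $b\sqrt{\epsilon/\Pr(c,r)}$, requires the approximation error $\epsilon$ to vanish. In your finite setting you can get this by exact interpolation within the softcap range.
\end{itemize}
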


\begin{remark}
The \RelScore\ is symmetric with respect to context and response. A context-independent, ``one-size-fits-all'' response, which carries no information about the specific context, has 
\(\Pr(Y=r \mid X=c) = \Pr(Y=r)\) and thus \(\RelScore(c,r) = 0\). 
In contrast, responses that are highly context-specific yield positive scores.

The \emph{chain rule} of PMI implies that adding additional contextual information can further increase or decrease the pointwise mutual information. For example, if we extend the response from \(r\) to \((r, z)\), then
\[
\pmi(c;(r,z)) = \pmi(c; r) + \pmi(c;z\mid r),
\]

The score increases when the additional context \(z\) provides positive information about the context, i.e., \(\pmi(c;z\mid r) > 0\), and decreases when \(z\) provides misleading or negatively correlated information, i.e., \(\pmi(c;z\mid r) < 0\). This highlights that \(\RelScore\) dynamically reflects the incremental contribution of new texts to the relevance of a context.

While Theorem \ref{thm:convergence} relies on ideal semantic extraction (Assumption 5.1), the method holds under practical conditions where the embedding space exhibits semantic continuity. Specifically, if similar meanings map to proximal vectors, the algorithm converges to the true PMI with an error bound proportional to the embedding distortion. This theoretical robustness is supported by our extensive experiments (see Appendix), where PMIScore maintains consistent performance across diverse LLM architectures and embedding dimensions.
\end{remark}

\section{Synthetic Study}
\begin{figure*}[!ht]
    \centering
    \includegraphics[width=0.95\linewidth]{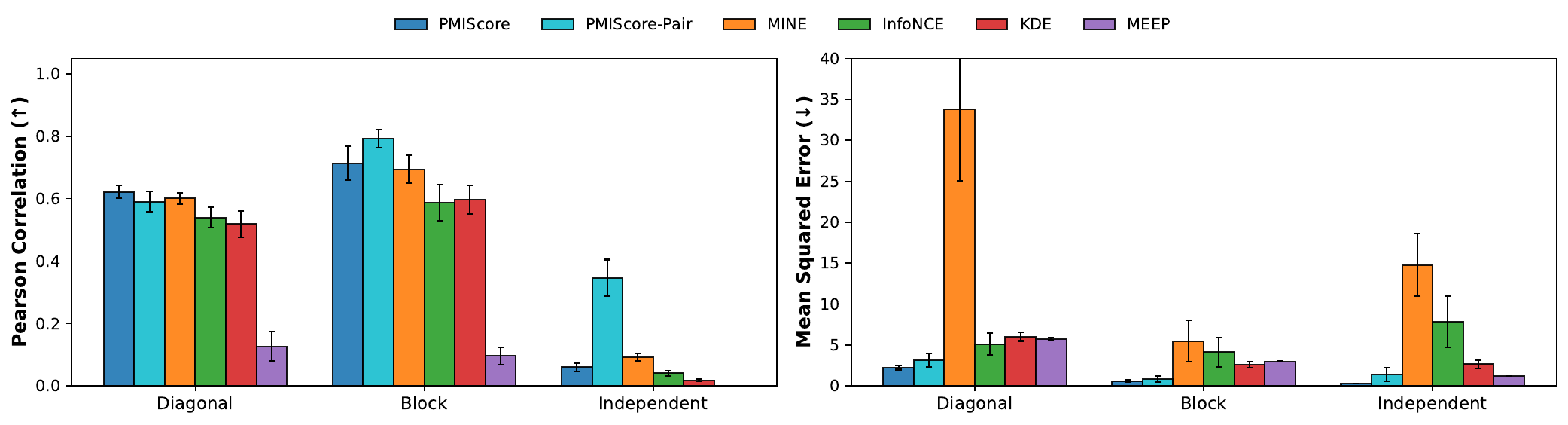}
    \caption{\textbf{\RelScore vs. baselines on synthetic distributions.}
    The figure shows the Spearman correlation (left) and mean squared error (right) of different methods (\RelScore, MINE, InfoNCE, and KDE) on three synthetic dependency structures (\emph{Diagonal}, \emph{Block}, and \emph{Independent}). Higher Spearman values indicate better alignment with the true PMI ranking and lower MSE means better approximation of the accurate PMI. \RelScore achieves consistently highest rank correlation and lowest MSE across almost all settings.}
    \label{fig:spearman_avg}
\end{figure*}
We run a controlled synthetic study to test whether \textsc{\RelScore} can recover the true pointwise mutual information (PMI) between a context and a response.
Unlike the real dialogue datasets used later, this setting provides analytic ground truth for PMI.
% The full implementation, including data generation and evaluation, is part of the same notebook as the real-data experiments.
\subsection{Experimental Setup}
\paragraph{Synthetic Distributions}
We build three kinds of joint distributions $p(C,R)$ with different dependency patterns between context $C$ and response $R$:
\begin{enumerate}
    \item \textbf{Diagonal:} one-to-one aligned pairs ($C_i \leftrightarrow R_i$);
    \item \textbf{Block:} pairs correlated within several topic blocks but independent across blocks;
    \item \textbf{Independent:} $C$ and $R$ sampled independently from their marginals.
\end{enumerate}
Each distribution yields an analytic PMI target.
For every distribution we sample 5{,}000 pairs and split them into 60\% train, 20\% validation, and 20\% test.

\paragraph{Negative Sampling}
The training process uses both positive and negative pairs.
\textbf{Positives} are the true $(C,R)$ pairs sampled from $p(C,R)$.
\textbf{Negatives} are used only during training and are formed by randomly mismatching $C$ and $R$ within the training positives.
We pre-generate a pool of 15 negatives per positive sample and iterate through them in 5 training rounds (using 3 distinct negatives per round). This mirrors real scenarios where we must derive negatives from observed data without access to $p(C)$ or $p(R)$.
The validation and test sets contain only positives.

\paragraph{Embeddings and Scoring Network}
Each context–response pair $(c,r)$ is represented using sentence embeddings from several pretrained LLMs\footnote{Including Llama-3.2-3B, Phi-4-mini, Qwen3-0.6B/1.7B/4B/8B.}. 
A lightweight multilayer perceptron maps these embeddings to a scalar score approximating the true pointwise mutual information (PMI). More details about the training setting are shown in \Cref{app:synthetic}.

\subsection{Compared Methods}
We compare \textsc{\RelScore} with three mutual information estimators in prior work: \textbf{MINE}, \textbf{InfoNCE}, and \textbf{KDE}. 
We also include \textbf{MEEP}~\cite{ferron2023meep}, a state-of-the-art prompt-based method for dialogue engagement evaluation, as an additional baseline.

\textbf{MINE}~\cite{ishmael2018mine} is a variational mutual information estimator that optimizes the Donsker–Varadhan lower bound on the Kullback–Leibler divergence between the joint and product-of-marginals distributions. 
It relies on training a neural critic to maximize this bound, yielding flexible yet potentially unstable estimates when data are high-dimensional.

\textbf{InfoNCE}~\cite{oord2018representation} formulates mutual information estimation as a contrastive prediction task: given a context $c$, the correct response $r^+$ must be distinguished from multiple negative samples $\{r^-\}$. 
The objective encourages higher similarity for true pairs and penalizes confusion with negatives via a temperature-scaled softmax. 
This formulation is efficient and widely used in self-supervised learning but is sensitive to negative sampling quality and temperature tuning.

\textbf{KDE}~\cite{parzen1962estimation} is a classical non-parametric baseline that estimates $p(c,r)$ and $p(c)p(r)$ through kernel density estimation, and then computes $\log p(c,r) - \log p(c)p(r)$. 
While conceptually simple, KDE struggles in high-dimensional embedding spaces, where density estimation becomes unreliable.

\textbf{MEEP}~\cite{ferron2023meep} utilizes large language models (LLMs) to directly score dialogue engagement via prompting. It leverages the semantic understanding capabilities of LLMs to assess the quality and relevance of responses.

Together, these baselines span neural, contrastive, statistical, and prompt-based paradigms, enabling a controlled comparison with \textsc{\RelScore}.

\paragraph{Evaluation Metrics}
Performance is evaluated primarily by \textbf{Mean Squared Error (MSE)} between predicted and ground-truth PMI, complemented by \textbf{Spearman Correlations} to assess the rank consistency.

\subsection{Results and Analysis}
As shown in \Cref{fig:spearman_avg}, \textsc{\RelScore} consistently achieves lower MSE and higher rank correlation ($\rho$) than the baselines across all dependency structures (\textit{Diagonal}, \textit{Block}, and \textit{Independent}).
This indicates that \textsc{\RelScore} not only recovers the correct ordering of mutual information strength but also provides more accurate quantitative estimates. 
While InfoNCE performs reasonably well on rank correlation, it struggles to recover the true scale of PMI (higher MSE). This is because the InfoNCE objective approximates a lower bound on mutual information ($\log K$) rather than the pointwise value itself, and its logits are unnormalized log-density ratios up to a constant shift. In contrast, \RelScore, via the dual-form objective, directly estimates the density ratio without such constraints, capturing the absolute magnitude of information.
MEEP, while leveraging the power of LLMs, shows lower correlation and higher MSE compared to \RelScore, suggesting that direct prompting may not fully capture the nuanced statistical dependencies measured by PMI.
Figure~\ref{fig:synthetic_scatter} further illustrates the estimation behavior on the \textit{Block} dataset: predictions from \textsc{\RelScore} align closely with the identity line, while MINE and InfoNCE exhibit systematic bias or slope distortion, and KDE shows wide dispersion due to high-dimensional estimation noise.

Overall, the synthetic results validate that \textsc{\RelScore} aligns with the theoretical definition of PMI while remaining robust to variations in dependency structure and embedding model. 

\begin{figure*}[!ht]
    \centering
    \includegraphics[width=0.95\linewidth]{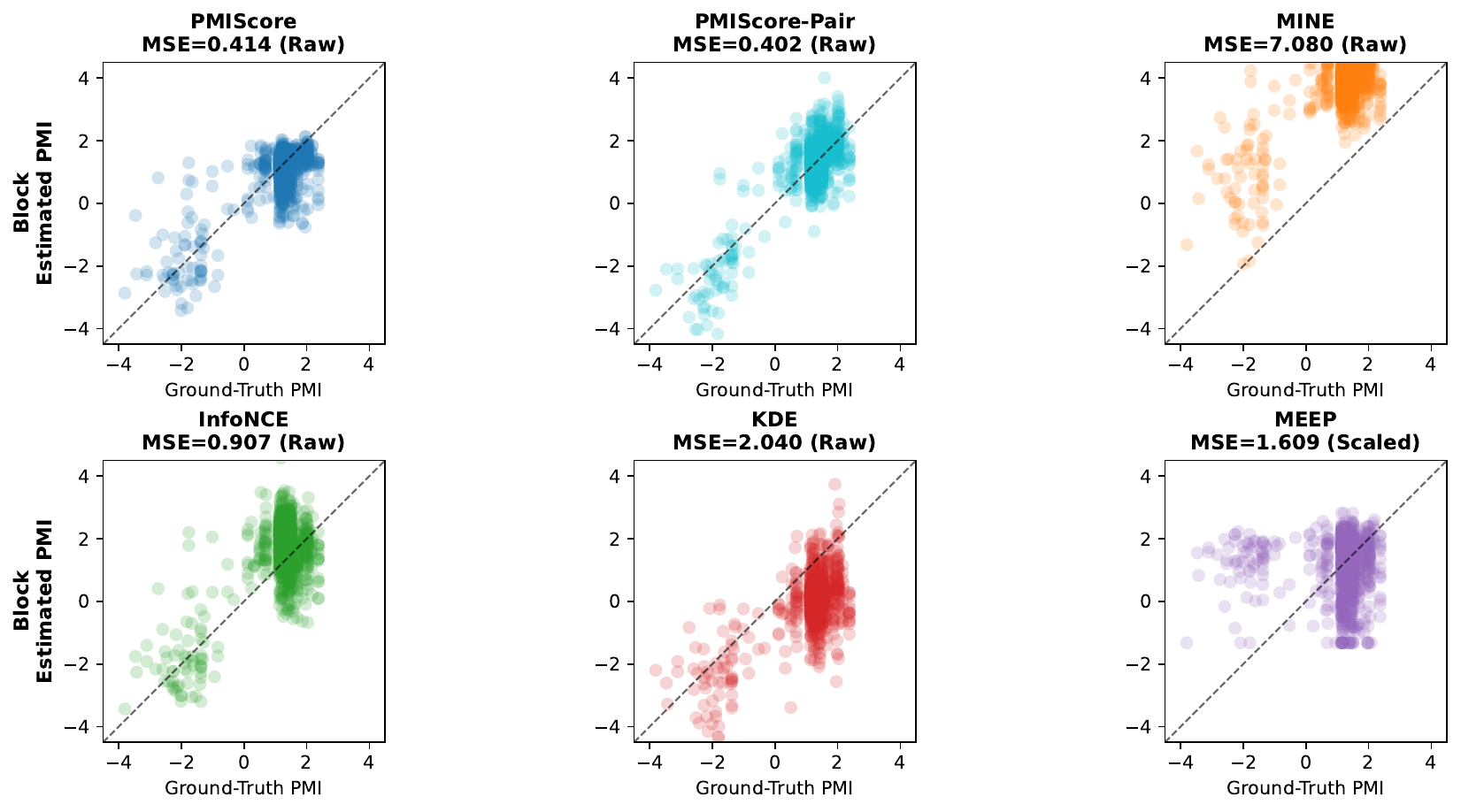}
    \caption{\textbf{Consistency of PMI estimation for different methods.}
    Each panel compares the estimated versus ground-truth pointwise mutual information (PMI) for four estimation methods—\RelScore, MINE, InfoNCE, and KDE—on the \emph{Block} synthetic dataset with the \texttt{Qwen3-4B} embeddings. 
    To illustrate the magnitude of estimation error, 1{,}000 representative samples are plotted in each figure. The x-axis is the ground-truth of PMI and the y-axis is the estimated value. The dashed line means that the estimated PMI perfectly matchs the ground truth. While \RelScore aligns closely with the identity line, indicating minimal estimation bias, alternative estimators display noticeable deviations or slope distortions. Similar patterns are also observed across other embedding models and synthetic datasets.}

    \label{fig:synthetic_scatter}
\end{figure*}

\section{Empirical Study}
\label{sec:empirical}
\begin{figure*}[!ht]
    \centering
    \includegraphics[width=0.95\linewidth]{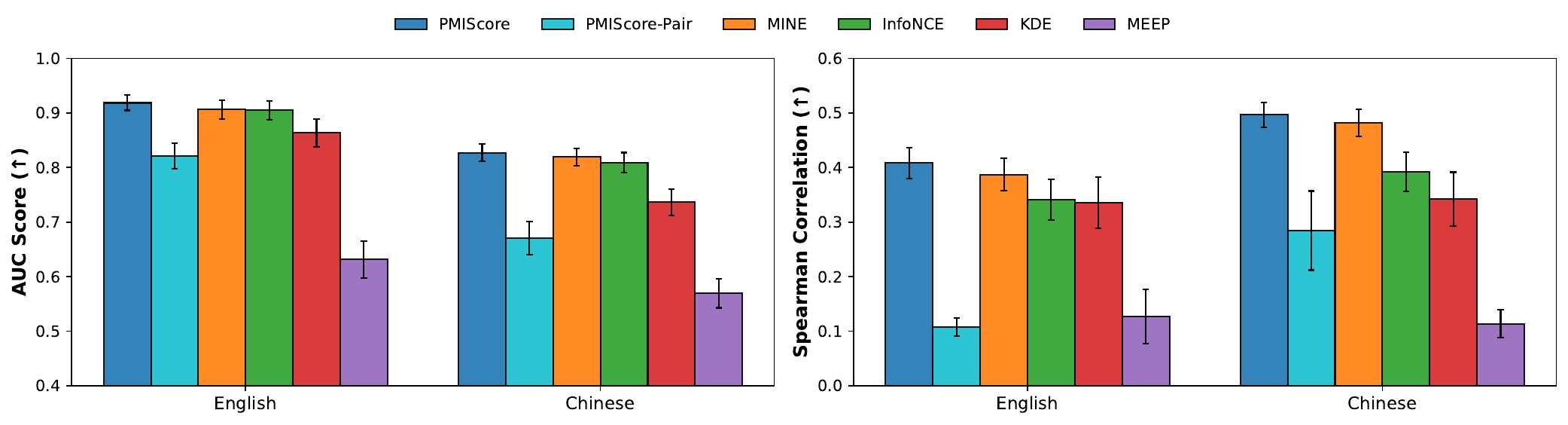}
    \caption{\textbf{\RelScore vs.\ baselines on empirical dialogue datasets.}
    Bars show mean across different llms for English (\texttt{dstc\_en}) and Chinese (\texttt{dstc\_zh}).
    \emph{Left:} ROC-AUC on \emph{val/test} for the response-ranking task. 
    \emph{Right:} Spearman correlation with human relevance on the dev sets.
    Higher ROC-AUC indicates that the score can better distinguish the positive and negative pairs. \RelScore closely tracks \textbf{InfoNCE} on AUC and yields the best human correlation on English while remaining competitive on Chinese; both substantially outperform \textbf{KDE}.}
    \label{fig:empirical_methods_by_language_avg}
\end{figure*}

This section evaluates whether \textsc{\RelScore} learned in real multi-turn dialogues can (i) rank the true response above hard negatives and (ii) align with human-perceived relevance.
Unlike the synthetic setting---where ground-truth PMI is analytically available---here we rely on task performance and human judgments as proxies.
The pipeline mirrors the synthetic study: we embed $(c,r)$ pairs with LLMs, train a small scorer with the dual-form MI objective (\RelScore), and compare against \textbf{MINE}, \textbf{InfoNCE}, \textbf{KDE}, and \textbf{MEEP}.

\subsection{Experimental Setup}

\paragraph{Datasets.}
We evaluate on the multilingual dialogue collections from DSTC-11 Track~4 (\texttt{dstc\_en}, \texttt{dstc\_zh}; \cite{rodriguezcantelar2023dstc11t4}).
Representative English sources include \texttt{DailyDialog}~\cite{li2017dailydialog}, \texttt{PersonaChat}/ConvAI2~\cite{zhang2018personalizing}, \texttt{EmpatheticDialogues}~\cite{rashkin2019empathetic}, and \texttt{TopicalChat}~\cite{gopalakrishnan2019topical}. Representative Chinese sources include \texttt{LCCC}~\cite{wang2020lccc} and \texttt{KdConv}~\cite{zhou2020kdconv}.
Each example uses a multi-turn context $c$ (dialogue history) paired with its response $r^+$.

\paragraph{Data split.}
For each language, we randomly sample 3{,}000/1{,}000 /1{,}000 context--response pairs for \emph{train}/\emph{validation}/\emph{test} from the DSTC-11 Track~4 training pool.
Model selection is given by \emph{validation Area Under the Curve (AUC)}; all retrieval results are reported on the held-out \emph{test} split.

\textit{Negative construction.}
For each positive $(c, r^+)$ we generate 3 negative samples. Each negative sample has a 10\% probability of being drawn from the same dialogue (in-dialogue negative) and a 90\% probability of being a random response from the corpus.
This mixture yields both hard (in-dialogue) and diverse (random) distractors while avoiding label leakage.

\emph{Human annotations.}
For correlation analysis we use the human-annotated dev packs included in Track~4.
We take the averaged relevance field \texttt{annot\_relevant\_mean} as the human score and evaluate per llm on $n{=}375$ English and $n{=}1{,}589$ Chinese pairs, respectively.

\paragraph{llms and embeddings.}
We probe scaling with compact encoders: Qwen3 (0.6B/1.7B/4B/8B), Llama-3.2-3B-Instruct, and Phi-4-mini.
For each $(c,r)$, sentence embeddings are fed to a lightweight MLP scorer.
Architecture and preprocessing mirror the synthetic study for an apples-to-apples comparison.

\paragraph{Training protocol.}
The only difference from the synthetic setup is the absence of analytic PMI.
All other ingredients are identical: the negative recipe, batching, optimizer, and early stopping by validation AUC. We train for 5 rounds, resampling negatives each round.

\subsection{Metrics}

\paragraph{(1) ROC-AUC (response ranking).}
Given a context $c$, we score the gold response $r^+$ and its negatives $\{r^-\}$.
ROC-AUC is the probability that a randomly chosen positive is ranked above a randomly chosen negative; it is computed from the score distributions without choosing a threshold.
We report overall AUC and show averages across llms in Fig.~\ref{fig:empirical_methods_by_language_avg}.

\paragraph{(2) Spearman correlation (human alignment).}
To evaluate \RelScore's alignment to human labels, we compute Spearman’s $\rho$ between model scores and human annotated relevance, and we show the average across llms.

\subsection{Results}

\paragraph{Aggregated AUC}
As summarized in Fig.~\ref{fig:empirical_methods_by_language_avg}, \textbf{\RelScore} outperforms all baselines on AUC, including InfoNCE.
While InfoNCE is designed to optimize a contrastive ranking objective, \RelScore achieves superior ranking performance by learning a calibrated pointwise mutual information score. This suggests that directly estimating the density ratio (PMI) provides a more robust signal for distinguishing true responses from negatives than the standard contrastive loss, especially in the resampled negative training setting.
\textbf{InfoNCE} follows closely, which is expected given its alignment with ranking metrics, while \textbf{KDE} significantly underperforms due to the high dimensionality of the embeddings. \textbf{MEEP}, relying on direct prompting, shows weaker discrimination capability compared to the training-based methods.

\paragraph{Human alignment}
Fig.~\ref{fig:empirical_methods_by_language_avg} (right) shows that \textbf{\RelScore} attains the highest correlation with human judgments on both English and Chinese dev sets.
Unlike purely contrastive objectives like InfoNCE, which are shift-invariant (i.e., $f(x,y)$ and $f(x,y)+C$ yield the same softmax probabilities), \textbf{\RelScore} is grounded in the dual form of KL-divergence, driving the score towards the true log-likelihood ratio. This scale-sensitive calibration aligns better with human perception of engagement, where the \emph{degree} of relevance matters, not just the relative ordering.
Furthermore, \RelScore surpasses MINE and MEEP, demonstrating the superiority of the dual-form MI objective over variational bounds and zero-shot prompting for engagement evaluation.

\paragraph{Takeaways}
\textbf{(i) Ranking.} \RelScore achieves the best retrieval performance (AUC), demonstrating its robustness in distinguishing valid responses from hard negatives.
\textbf{(ii) Human alignment.} \RelScore consistently leads in alignment with human annotations across languages, validating PMI as a superior proxy for engagement compared to uncalibrated contrastive scores or direct prompting.
\textbf{(iii) Interpretability.} \RelScore estimates PMI itself; its scores carry clear semantics that transfer across backbones and languages, providing a principled alternative to contrastive-only training.

\section{Conclusion}

In this paper, we propose the \RelScore to measure the engagement of a dialogue. It generates positive and negative samples from the dialogue corpus, extracts the semantic embeddings by LLMs and trains a neural network by mutual information based loss functions. We justify its theoretical interpretation and validate its performance on both synthetic and real-world datasets. 

There are several future directions to improve our work. First, we will explore the application of PMIScore to enhance dialogue generation agents. This can be achieved by integrating it as a reward function within a reinforcement learning framework or by employing it as a fine-tuning objective for large language models (LLMs).

Second, as real-world conversations often require domain-specific adaptations in real time, such as interactions with strangers or subject matter experts, we can extend our framework to an online learning setting. This will enable the system to quickly adapt to new conversational styles and dynamics.

Finally, since our theoretical results rely on the semantic assumptions of LLMs, a critical area for future research is to investigate methods for obtaining more robust embeddings. This is essential for mitigating potential errors in PMIScore and ensuring the reliability of our model. Additionally, we note that the \RelScore framework optimizes a dual objective where gradients are not naturally bounded, unlike InfoNCE which is constrained by $\log K$. In our experiments, we addressed this by applying a softcap operator (e.g., $20\tanh(x/20)$) to the score output, serving as a regularization term to enforce Lipschitz continuity and improve robustness against outliers (e.g., false negatives in sampling). Future work could explore more theoretically grounded ways to bound these gradients or adaptive methods to set the softcap parameter.

\newpage
\bibliographystyle{apalike}
\bibliography{ref}

@article{rogers1957active,
  title={Active listening},
  author={Rogers, Carl R and Farson, Richard E},
  journal={Industrial Relations Center of the University of Chicago},
  year={1957}
}

@article{xu2024benchmarking,
  title={Benchmarking LLMs' Judgments with No Gold Standard},
  author={Xu, Shengwei and Lu, Yuxuan and Schoenebeck, Grant and Kong, Yuqing},
  journal={arXiv preprint arXiv:2411.07127},
  year={2024}
}

@article{schoenebeck2020learning,
  title={Learning and strongly truthful multi-task peer prediction: A variational approach},
  author={Schoenebeck, Grant and Yu, Fang-Yi},
  journal={arXiv preprint arXiv:2009.14730},
  year={2020}
}

@inproceedings{santra2022representation,
  title={Representation Learning for Conversational Data using Discourse Mutual Information Maximization},
  author={Santra, Bishal and Roychowdhury, Sumegh and Mandal, Aishik and Gurram, Vasu and Naik, Atharva and Gupta, Manish and Goyal, Pawan},
  booktitle={Proceedings of the 2022 Conference of the North American Chapter of the Association for Computational Linguistics: Human Language Technologies},
  pages={1718--1734},
  year={2022}
}

@article{yi2019towards,
  title={Towards coherent and engaging spoken dialog response generation using automatic conversation evaluators},
  author={Yi, Sanghyun and Goel, Rahul and Khatri, Chandra and Cervone, Alessandra and Chung, Tagyoung and Hedayatnia, Behnam and Venkatesh, Anu and Gabriel, Raefer and Hakkani-Tur, Dilek},
  journal={arXiv preprint arXiv:1904.13015},
  year={2019}
}

@article{guo2018topic,
  title={Topic-based evaluation for conversational bots},
  author={Guo, Fenfei and Metallinou, Angeliki and Khatri, Chandra and Raju, Anirudh and Venkatesh, Anu and Ram, Ashwin},
  journal={arXiv preprint arXiv:1801.03622},
  year={2018}
}

@article{see2019makes,
  title={What makes a good conversation? how controllable attributes affect human judgments},
  author={See, Abigail and Roller, Stephen and Kiela, Douwe and Weston, Jason},
  journal={arXiv preprint arXiv:1902.08654},
  year={2019}
}

@article{oord2018representation,
  title={Representation learning with contrastive predictive coding},
  author={Oord, Aaron van den and Li, Yazhe and Vinyals, Oriol},
  journal={arXiv preprint arXiv:1807.03748},
  year={2018}
}

@inproceedings{belghazi2018mutual,
  title={Mutual information neural estimation},
  author={Belghazi, Mohamed Ishmael and Baratin, Aristide and Rajeshwar, Sai and Ozair, Sherjil and Bengio, Yoshua and Courville, Aaron and Hjelm, Devon},
  booktitle={International conference on machine learning},
  pages={531--540},
  year={2018},
  organization={PMLR}
}

@article{nguyen2009surrogate,
  title={On surrogate loss functions and f-divergences},
  author={Nguyen, XuanLong and Wainwright, Martin J and Jordan, Michael I},
  year={2009}
}

@article{nguyen2010estimating,
  title={Estimating divergence functionals and the likelihood ratio by convex risk minimization},
  author={Nguyen, XuanLong and Wainwright, Martin J and Jordan, Michael I},
  journal={IEEE Transactions on Information Theory},
  volume={56},
  number={11},
  pages={5847--5861},
  year={2010},
  publisher={IEEE}
}

@article{rockafellar2015convex,
  title={Convex analysis:(pms-28)},
  author={Rockafellar, Ralph Tyrell},
  year={2015},
  publisher={Princeton university press}
}

@article{kong2019information,
  title={An information theoretic framework for designing information elicitation mechanisms that reward truth-telling},
  author={Kong, Yuqing and Schoenebeck, Grant},
  journal={ACM Transactions on Economics and Computation (TEAC)},
  volume={7},
  number={1},
  pages={1--33},
  year={2019},
  publisher={ACM New York, NY, USA}
}

@inproceedings{kong2018water,
  title={Water from two rocks: Maximizing the mutual information},
  author={Kong, Yuqing and Schoenebeck, Grant},
  booktitle={Proceedings of the 2018 ACM Conference on Economics and Computation},
  pages={177--194},
  year={2018}
}

@inproceedings{zhang2023dialogmi,
  title={DialogMI: A Dialogue Model Based on Enhancing Dialogue Mutual Information},
  author={Zhang, Yibo and Gong, Ping and Wang, Zelin and Li, Zhe and Yang, Xuanyuan},
  booktitle={ICASSP 2023-2023 IEEE International Conference on Acoustics, Speech and Signal Processing (ICASSP)},
  pages={1--5},
  year={2023},
  organization={IEEE}
}

@inproceedings{ren2023c,
  title={C-PMI: Conditional Pointwise Mutual Information for Turn-level Dialogue Evaluation},
  author={Ren, Liliang and Sidhu, Mankeerat and Zeng, Qi and Reddy, Revanth Gangi and Ji, Heng and Zhai, ChengXiang},
  booktitle={Proceedings of the Third DialDoc Workshop on Document-grounded Dialogue and Conversational Question Answering},
  pages={80--85},
  year={2023}
}

@article{su2024joint,
  title={Joint dual learning with mutual information maximization for natural language understanding and generation in dialogues},
  author={Su, Shang-Yu and Chung, Yung-Sung and Chen, Yun-Nung},
  journal={IEEE/ACM Transactions on Audio, Speech, and Language Processing},
  volume={32},
  pages={2445--2452},
  year={2024},
  publisher={IEEE}
}

@inproceedings{ferron2023meep,
  title={MEEP: Is this engaging? prompting large language models for dialogue evaluation in multilingual settings},
  author={Ferron, Amila and Shore, Amber and Mitra, Ekata and Agrawal, Ameeta},
  booktitle={Findings of the Association for Computational Linguistics: EMNLP 2023},
  pages={2078--2100},
  year={2023}
}

@inproceedings{papineni2002bleu,
  title={Bleu: a method for automatic evaluation of machine translation},
  author={Papineni, Kishore and Roukos, Salim and Ward, Todd and Zhu, Wei-Jing},
  booktitle={Proceedings of the 40th annual meeting of the Association for Computational Linguistics},
  pages={311--318},
  year={2002}
}

@inproceedings{lin2004rouge,
  title={Rouge: A package for automatic evaluation of summaries},
  author={Lin, Chin-Yew},
  booktitle={Text summarization branches out},
  pages={74--81},
  year={2004}
}

@article{zhang2019bertscore,
  title={Bertscore: Evaluating text generation with bert},
  author={Zhang, Tianyi and Kishore, Varsha and Wu, Felix and Weinberger, Kilian Q and Artzi, Yoav},
  journal={arXiv preprint arXiv:1904.09675},
  year={2019}
}

@article{liu2023g,
  title={G-eval: NLG evaluation using gpt-4 with better human alignment},
  author={Liu, Yang and Iter, Dan and Xu, Yichong and Wang, Shuohang and Xu, Ruochen and Zhu, Chenguang},
  journal={arXiv preprint arXiv:2303.16634},
  year={2023}
}

@article{fu2023gptscore,
  title={Gptscore: Evaluate as you desire},
  author={Fu, Jinlan and Ng, See-Kiong and Jiang, Zhengbao and Liu, Pengfei},
  journal={arXiv preprint arXiv:2302.04166},
  year={2023}
}

@article{Ali1966general,
  author  = {Ali, S. M. and Silvey, S. D.},
  title   = {A general class of coefficients of divergence of one distribution from another},
  journal = {Journal of the Royal Statistical Society: Series B (Methodological)},
  year    = {1966},
  volume  = {28},
  number  = {1},
  pages   = {131--142},
}

@article{ishmael2018mine,
  title={MINE: mutual information neural estimation},
  author={Ishmael Belghazi, Mohamed and Baratin, Aristide and Rajeswar, Sai and Ozair, Sherjil and Bengio, Yoshua and Courville, Aaron and Devon Hjelm, R},
  journal={arXiv e-prints},
  pages={arXiv--1801},
  year={2018}
}

@article{parzen1962estimation,
  title={On estimation of a probability density function and mode},
  author={Parzen, Emanuel},
  journal={The annals of mathematical statistics},
  volume={33},
  number={3},
  pages={1065--1076},
  year={1962},
  publisher={JSTOR}
}

@article{jiang2023scaling,
  title={Scaling sentence embeddings with large language models},
  author={Jiang, Ting and Huang, Shaohan and Luan, Zhongzhi and Wang, Deqing and Zhuang, Fuzhen},
  journal={arXiv preprint arXiv:2307.16645},
  year={2023}
}

@article{zheng2024proper,
  title={Proper Dataset Valuation by Pointwise Mutual Information},
  author={Zheng, Shuran and Qi, Xuan and Chen, Rui Ray and Kwon, Yongchan and Zou, James},
  journal={arXiv preprint arXiv:2405.18253},
  year={2024}
}

@inproceedings{rodriguezcantelar2023dstc11t4,
    author    = "Mario Rodríguez-Cantelar and Chen Zhang and Chengguang Tang and Ke Shi and Sarik Ghazarian and João Sedoc and Luis Fernando D'Haro and Alexander Rudnicky",
    title     = "Overview of Robust and Multilingual Automatic Evaluation Metrics for Open-Domain Dialogue Systems at DSTC 11 Track 4",
    booktitle = "DSTC11: The Eleventh Dialog System Technology Challenge",
    series    = "24th Meeting of the Special Interest Group on Discourse and Dialogue (SIGDIAL)",
    year      = 2023,
    month     = "September",
    address   = "Prague, Czechia"
}

@inproceedings{li2017dailydialog,
  title     = "{D}aily{D}ialog: A Manually Labelled Multi-turn Dialogue Dataset",
  author    = "Li, Yanran and Su, Hui and Shen, Xiaoyu and Li, Wenjie and Cao, Ziqiang and Niu, Shuzi",
  booktitle = "Proceedings of the Eighth International Joint Conference on Natural Language Processing (Volume 1: Long Papers)",
  publisher = "Asian Federation of Natural Language Processing",
  address   = "Taipei, Taiwan",
  month     = nov,
  year      = {2017},
  pages     = "986--995",
  url       = "https://aclanthology.org/I17-1099/"
}

@inproceedings{zhang2018personalizing,
  title     = "Personalizing Dialogue Agents: {I} have a dog, do you have pets too?",
  author    = "Zhang, Saizheng and Dinan, Emily and Urbanek, Jack and Szlam, Arthur and Kiela, Douwe and Weston, Jason",
  booktitle = "Proceedings of the 56th Annual Meeting of the Association for Computational Linguistics (Volume 1: Long Papers)",
  publisher = "Association for Computational Linguistics",
  address   = "Melbourne, Australia",
  month     = jul,
  year      = {2018},
  pages     = "2204--2213",
  doi       = "10.18653/v1/P18-1205",
  url       = "https://aclanthology.org/P18-1205/"
}

@inproceedings{rashkin2019empathetic,
  title     = "Towards Empathetic Open-domain Conversation Models: A New Benchmark and Dataset",
  author    = "Rashkin, Hannah and Smith, Eric Michael and Li, Margaret and Boureau, Y-Lan",
  booktitle = "Proceedings of the 57th Annual Meeting of the Association for Computational Linguistics",
  publisher = "Association for Computational Linguistics",
  address   = "Florence, Italy",
  month     = jul,
  year      = {2019},
  pages     = "5370--5381",
  doi       = "10.18653/v1/P19-1534",
  url       = "https://aclanthology.org/P19-1534/"
}

@inproceedings{gopalakrishnan2019topical,
  title     = "Topical-Chat: Towards Knowledge-Grounded Open-Domain Conversations",
  author    = "Gopalakrishnan, Karthik and Hedayatnia, Behnam and Chen, Qinlang and Gottardi, Anna and Kwatra, Sanjeev and Venkatesh, Anu and Gabriel, Raefer and Hakkani-T{\"u}r, Dilek",
  booktitle = "INTERSPEECH",
  year      = {2019},
  pages     = "1891--1895",
  address   = "Graz, Austria",
  doi       = "10.21437/Interspeech.2019-3079",
  url       = "https://www.isca-archive.org/interspeech_2019/gopalakrishnan19_interspeech.html"
}

@incollection{wang2020lccc,
  title     = "A Large-Scale Chinese Short-Text Conversation Dataset",
  author    = "Wang, Yida and Ke, Pei and Zheng, Yinhe and Huang, Kaili and Jiang, Yong and Zhu, Xiaoyan and Huang, Minlie",
  booktitle = "Natural Language Processing and Chinese Computing: NLPCC 2020, Proceedings, Part I",
  series    = "Lecture Notes in Computer Science",
  volume    = "12430",
  publisher = "Springer, Cham",
  year      = {2020},
  pages     = "91--103",
  doi       = "10.1007/978-3-030-60450-9_8",
  url       = "https://link.springer.com/book/10.1007/978-3-030-60450-9"
}

@inproceedings{zhou2020kdconv,
  title     = "{K}d{C}onv: A {C}hinese Multi-domain Dialogue Dataset Towards Multi-turn Knowledge-driven Conversation",
  author    = "Zhou, Hao and Zheng, Chujie and Huang, Kaili and Huang, Minlie and Zhu, Xiaoyan",
  booktitle = "Proceedings of the 58th Annual Meeting of the Association for Computational Linguistics",
  publisher = "Association for Computational Linguistics",
  year      = {2020},
  address   = "Online",
  pages     = "7098--7108",
  doi       = "10.18653/v1/2020.acl-main.635",
  url       = "https://aclanthology.org/2020.acl-main.635/"
}

\appendix
\section{Appendix}

\subsection{$f$-Mutual Information}

\begin{definition}[$f$-Mutual Information \citep{kong2019information}]\label{def:fmif}
Given two random variables $X,Y$, the $f$-mutual information between $X$ and $Y$ is defined as
\begin{align*}
    \mi^f(X;Y)=\sum_{x,y}\Pr(x)\Pr(y)f\left(\frac{\Pr(x,y)}{\Pr(x)\Pr(y)}\right),
\end{align*}

where $f$ is a convex function and $f(1)=0$.
\end{definition}

By picking $f(t)=t\log t$, we obtain the classic Shannon mutual information,
\[
    \mi(X;Y)=\sum_{x,y}\Pr(x, y)\log \left(\frac{\Pr(x,y)}{\Pr(x)\Pr(y)}\right).
\]

$f$-Mutual Information has some desired properties.

\begin{lemma}[Properties of $f$-Mutual Information \citep{kong2019information}]
\label{properties}
$f$-Mutual information satisfies
\begin{description}
\item [Symmetry:] $\mi^f(X;Y)=\mi^f(Y;X)$;
\item [Non-negativity:] $\mi^f(X;Y)$ is always non-negative and is 0 if $X$ is independent of $Y$;
\item [Information Monotonicity:] $\mi^f(T(X);Y)\leq \mi^f(X;Y)$ where $T(\cdot)\in \mathbb{R}^{|\Sigma_X|\times |\Sigma_X|}$ is a possibly random operator on $X$ whose randomness is independent of $Y$.
\end{description}
\end{lemma}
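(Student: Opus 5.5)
The plan is to prove the three properties directly from the definition of $\mi^f$ in Definition~\ref{def:fmif}, using only convexity of $f$, the normalization $f(1)=0$, and Jensen's inequality. Throughout I work with finite alphabets $\Sigma_X,\Sigma_Y$. I sum only over pairs with $\Pr(x)\Pr(y)>0$, so every ratio $\frac{\Pr(x,y)}{\Pr(x)\Pr(y)}$ is well defined.

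\textbf{Symmetry.} Both the weight $\Pr(x)\Pr(y)$ and the ratio $\frac{\Pr(x,y)}{\Pr(x)\Pr(y)}$ are invariant under exchanging the roles of $X$ and $Y$, since $\Pr(X=x,Y=y)=\Pr(Y=y,X=x)$. So the defining sum for $\mi^f(X;Y)$ is, term by term, the sum for $\mi^f(Y;X)$ after relabeling the summation indices.

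\textbf{Non-negativity.} The weights $\Pr(x)\Pr(y)$ form a probability distribution on $\Sigma_X\times\Sigma_Y$. By Jensen's inequality for the convex $f$,
\[
\mi^f(X;Y)\ \ge\ f\Big(\sum_{x,y}\Pr(x)\Pr(y)\tfrac{\Pr(x,y)}{\Pr(x)\Pr(y)}\Big)=f\Big(\sum_{x,y}\Pr(x,y)\Big)=f(1)=0 .
\]
The last sum equals $1$ because $\Pr(x,y)=0$ whenever $\Pr(x)\Pr(y)=0$. Under independence every ratio equals $1$, so every term is $f(1)=0$.

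\textbf{Information monotonicity.} I model the possibly random operator $T$ as a Markov kernel $W(x'\mid x)=\Pr(T(X)=x'\mid X=x)$, i.e.\ a column-stochastic matrix. Because the randomness of $T$ is independent of $Y$, we have $\Pr(x',y)=\sum_x W(x'\mid x)\Pr(x,y)$ and $\Pr(x')=\sum_x W(x'\mid x)\Pr(x)$. For each $x'$ with $\Pr(x')>0$, define weights $w_{x'}(x)=W(x'\mid x)\Pr(x)/\Pr(x')$; these are nonnegative and sum to $1$ over $x$. The new ratio is then a convex combination of the old ratios:
\[
\frac{\Pr(x',y)}{\Pr(x')\Pr(y)}=\sum_x w_{x'}(x)\,\frac{\Pr(x,y)}{\Pr(x)\Pr(y)} .
\]
Applying Jensen gives $f(\text{new ratio})\le\sum_x w_{x'}(x)f(\text{old ratio})$. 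Multiplying by $\Pr(x')\Pr(y)$ and summing over $x',y$ yields
\[
\mi^f(T(X);Y)\le\sum_{x,y}\Big(\sum_{x'}W(x'\mid x)\Big)\Pr(x)\Pr(y)f\Big(\tfrac{\Pr(x,y)}{\Pr(x)\Pr(y)}\Big)=\mi^f(X;Y),
\]
using $\sum_{x'}W(x'\mid x)=1$.

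\textbf{Main obstacle.} I expect the main difficulty to be the bookkeeping in the monotonicity step, rather than the inequality itself. First, one must justify the kernel representation of $\Pr(x',y)$ from the stated independence of $T$'s randomness from $Y$; this means conditioning on $X$ and using that $T(X)$ and $Y$ are conditionally independent given $X$. Second, one must handle terms with $\Pr(x)=0$ or $\Pr(x')=0$ so that the weights $w_{x'}$ are well defined. I would treat zero-probability symbols by simply dropping them, noting that they contribute nothing to either side.
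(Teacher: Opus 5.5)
Your proof is correct. The paper itself gives no proof of this lemma and defers to \citet{kong2019information}. There the argument runs through the identity $\mi^f(X;Y)=d_f(XY;X\otimes Y)$ (stated just after this lemma in the appendix) and known properties of $f$-divergence. Symmetry is immediate and non-negativity is $d_f\ge 0$. Monotonicity is the data-processing inequality $d_f(\mathbf{p}K;\mathbf{q}K)\le d_f(\mathbf{p};\mathbf{q})$, applied after noting that the joint law and the product of marginals of $(T(X),Y)$ are both images of those of $(X,Y)$ under the same channel $K=T\otimes\mathrm{id}$.

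Your Jensen step uses weights $w_{x'}(x)=W(x'\mid x)\Pr(x)/\Pr(x')$, which are the posterior of $X$ given $T(X)=x'$ under the product distribution. That is exactly the standard proof of the data-processing inequality, specialized to this channel. So you are unpacking the cited argument rather than departing from it in substance. Your version is self-contained and does the zero-probability bookkeeping explicitly: $\Pr(x')=0$ forces $W(x'\mid x)\Pr(x)=0$ for all $x$, which justifies replacing $\sum_{x':\Pr(x')>0}W(x'\mid x)$ by $1$ when $\Pr(x)>0$. The reduction is shorter and makes clear that the only thing used is a single channel acting on both distributions.

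The one sentence to tighten is ``Because the randomness of $T$ is independent of $Y$, we have $\Pr(x',y)=\sum_x W(x'\mid x)\Pr(x,y)$.'' Taken literally, independence of $T$'s randomness from $Y$ alone does not give this, and the inequality can then fail. Let $X,Y$ be i.i.d.\ uniform bits and $T(x)=x\oplus R$ with $R=X\oplus Y$. Then $R$ is independent of $Y$, yet $T(X)=Y$, so $\mi^f(T(X);Y)=\tfrac12\bigl(f(0)+f(2)\bigr)>0=\mi^f(X;Y)$ for strictly convex $f$.

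The hypothesis has to be read as the matrix notation $T\in\mathbb{R}^{|\Sigma_X|\times|\Sigma_X|}$ intends: $\Pr(T(X)=x'\mid X=x,Y=y)=W(x'\mid x)$. Equivalently, $T$ uses fresh randomness independent of $(X,Y)$, so $T(X)$ and $Y$ are conditionally independent given $X$. You already identify this conditional independence as what is needed; take it as your assumption instead of deriving it from independence from $Y$ alone, and the rest of your argument goes through unchanged.
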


Now we introduce $f$-divergence, Fenchel's duality and their relation to $f$-mutual information.  

% Then we give a formal definition to $f$-Modularity with these main technical ingredients.

\begin{definition}[Fenchel Duality \citep{rockafellar2015convex}]\label{def:dual}
Given any function $f:\mathbb{R}\mapsto \mathbb{R}$, we define its convex conjugate $f^{\star}$ as a function that also maps $\mathbb{R}$ to $\mathbb{R}$ such that $$f^{\star}(x)=\sup_{t} tx-f(t).$$
\end{definition}

\begin{lemma}[Dual Form of $f$-Divergence\cite{nguyen2010estimating}]\label{lemma:fdual-ineq}
\begin{align*}
    d_f(\mathbf{p};\mathbf{q}) \geq & \sup_{u \in \mathcal{U}} \sum_{\sigma\in \Sigma}u(\sigma) \mathbf{p}(\sigma)- \sum_{\sigma\in \Sigma}f^{\star}(u(\sigma))\mathbf{q}(\sigma) \\
    = & \sup_{u \in \mathcal{U}} \E_{\mathbf{p}} u- \E_{\mathbf{q}}f^{\star}(u),
\end{align*}
where $\mathcal{U}$ is a set of functions that maps $\Sigma$ to $\mathbb{R}$. The equality holds if and only if $u(\sigma)\in \partial{f}(\frac{\mathbf{p}(\sigma)}{\mathbf{q}(\sigma)})$, i.e., the subdifferential of $f$ on value $\frac{\mathbf{p}(\sigma)}{\mathbf{q}(\sigma)}$.  
\end{lemma}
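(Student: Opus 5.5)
The plan is to reduce the inequality to a pointwise statement and then sum. I would use the convention $d_f(\mathbf{p};\mathbf{q})=\sum_{\sigma}\mathbf{q}(\sigma) f\bigl(\mathbf{p}(\sigma)/\mathbf{q}(\sigma)\bigr)$, matching Definition~\ref{def:fmif}. The tool is the Fenchel--Young inequality, which follows directly from Definition~\ref{def:dual}. For any $t$ and any real $u$ we have $f^{\star}(u)\ge ut-f(t)$, that is,
\[
f(t)\;\ge\; ut - f^{\star}(u).
\]
I will assume throughout that $f$ is convex and lower semicontinuous (closed), which holds in the cases of interest such as $f(t)=t\log t$. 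Under this assumption, equality in Fenchel--Young holds if and only if $u\in\partial f(t)$. This is the standard characterization from \citet{rockafellar2015convex}: $u\in\partial f(t)$ means $t$ maximizes $s\mapsto us-f(s)$, and that is exactly the statement that the supremum defining $f^{\star}(u)$ is attained at $s=t$.

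\textbf{Step 1 (pointwise bound).} Fix $u\in\mathcal{U}$ and fix $\sigma$ with $\mathbf{q}(\sigma)>0$. Apply Fenchel--Young with $t=\mathbf{p}(\sigma)/\mathbf{q}(\sigma)$ and with $u(\sigma)$ in place of $u$. Multiplying by $\mathbf{q}(\sigma)\ge 0$ gives
\[
\mathbf{q}(\sigma)\, f\!\left(\tfrac{\mathbf{p}(\sigma)}{\mathbf{q}(\sigma)}\right)\;\ge\; u(\sigma)\mathbf{p}(\sigma) - f^{\star}(u(\sigma))\,\mathbf{q}(\sigma).
\]

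\textbf{Step 2 (sum and take the supremum).} Summing over $\sigma$ gives $d_f(\mathbf{p};\mathbf{q})\ge \E_{\mathbf{p}}u-\E_{\mathbf{q}}f^{\star}(u)$ for every $u\in\mathcal{U}$. Since the left-hand side does not depend on $u$, the bound survives taking the supremum over $u$, which is the stated inequality. The last expression in the display is only a rewriting of the sums as expectations.

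\textbf{Step 3 (equality case).} The summed inequality is a sum of pointwise inequalities, each with a nonnegative gap. So equality holds if and only if every term with $\mathbf{q}(\sigma)>0$ is tight. By the Fenchel--Young equality condition, a term is tight exactly when $u(\sigma)\in\partial f\bigl(\mathbf{p}(\sigma)/\mathbf{q}(\sigma)\bigr)$. Choosing such a $u$ shows the supremum is attained, so the bound is sharp whenever these subdifferentials are nonempty.

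This gives the Shannon case used in the main text. Take $f(t)=t\log t$, so $\partial f(t)=\{\log t+1\}$. Then $f^{\star}(u)=e^{u-1}$, and the substitution $D=e^{u-1}$ turns the bound into the dual form in \eqref{eq:dual}. The optimizer becomes $D^\star=\mathbf{p}/\mathbf{q}$, which is $e^{\pmi}$.

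\textbf{Main obstacle: degenerate points.} The delicate part is the boundary cases, not the main argument. First, at $\sigma$ with $\mathbf{q}(\sigma)=0$ and $\mathbf{p}(\sigma)>0$, the term $\mathbf{q}f(\mathbf{p}/\mathbf{q})$ must be interpreted as $\mathbf{p}(\sigma)\lim_{t\to\infty}f(t)/t$. Here I would argue that $u(\sigma)\mathbf{p}(\sigma)$ is bounded above by this recession value whenever $f^{\star}(u(\sigma))<\infty$. Second, when $\mathbf{p}/\mathbf{q}$ lies on the boundary of the domain of $f$ (for example $t=0$ with $t\log t$), $\partial f$ may be empty. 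In that case the supremum is only approached by a sequence of $u$'s, not attained. I would handle both cases by restricting the ``if and only if'' to points where $\partial f$ is nonempty and noting this caveat explicitly.
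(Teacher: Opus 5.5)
Your proof is correct and is essentially the standard argument. The paper gives no proof of its own and defers to \citet{nguyen2010estimating}, where the bound is obtained as you do it: apply the Fenchel--Young inequality pointwise at $t=\mathbf{p}(\sigma)/\mathbf{q}(\sigma)$, weight by $\mathbf{q}(\sigma)$ and sum, with tightness exactly when $u(\sigma)\in\partial f\bigl(\mathbf{p}(\sigma)/\mathbf{q}(\sigma)\bigr)$. Your handling of the degenerate points ($\mathbf{q}(\sigma)=0$, or an empty subdifferential as for $t\log t$ at $t=0$) is more careful than the statement itself. One small correction: lower semicontinuity is needed only for those limiting cases, not for the equality characterization, which follows for any $f$ directly from the definition of $\partial f$.
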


\begin{definition}[$f$-Divergence \citep{ali1966general}]\label{def:df}
$f$-Divergence $d_f:\Delta_{\Sigma}\times \Delta_{\Sigma}\mapsto \mathbb{R}$ is a non-symmetric measure of the difference between distribution $\mathbf{p}\in \Delta_{\Sigma} $ and distribution $\mathbf{q}\in \Delta_{\Sigma} $
and is defined to be $$d_f(\mathbf{p};\mathbf{q})=\sum_{\sigma\in \Sigma}
\mathbf{q}(\sigma)f\bigg( \frac{\mathbf{p}(\sigma)}{\mathbf{q}(\sigma)}\bigg),$$
where $f:\mathbb{R}\mapsto\mathbb{R}$ is a convex function and $f(1)=0$.
\end{definition}

As an example, by picking $f(t)=t\log(t)$, we obtain the well-known KL-divergence $d_{KL}(\mathbf{p},\mathbf{q})=\sum_{\sigma}\mathbf{p}(\sigma)\log\frac{\mathbf{p}(\sigma)}{\mathbf{q}(\sigma)}$.

%Two common $f$-Divergences are KL Divergence $d_{KL}(\mathbf{p},\mathbf{q})=\sum_{\sigma}\mathbf{p}(\sigma)\log\frac{\mathbf{p}(\sigma)}{\mathbf{q}(\sigma)}$ by picking $f(t)=t\log(t)$, and Total Variance Distance $d_{TVD}(\mathbf{p},\mathbf{q})=\sum_{\sigma}|\mathbf{p}(\sigma)-\mathbf{q}(\sigma)|$ by picking $f(t)=|t-1|$.

The $f$-mutual information can be understood as the $f$-divergence between the joint distribution $\Pr(X=x,Y=y)$, denoted as $XY$, and the product of the marginal distribution $\Pr(X=x)\Pr(Y=y)$, denoted as $X\otimes Y$. 

\begin{lemma} We have the equation
    $$\mi^f(X;Y)=d_f(XY;X\otimes Y).$$
\end{lemma}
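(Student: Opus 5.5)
The plan is to unfold both sides from their definitions and check that they are the same sum. By Definition~\ref{def:df}, with $\mathbf{p}=XY$ and $\mathbf{q}=X\otimes Y$ on the product alphabet $\Sigma=\Sigma_X\times\Sigma_Y$, we have $\mathbf{p}(x,y)=\Pr(x,y)$ and $\mathbf{q}(x,y)=\Pr(x)\Pr(y)$. This gives
\[
d_f(XY;X\otimes Y)=\sum_{(x,y)\in\Sigma_X\times\Sigma_Y}\Pr(x)\Pr(y)\,f\!\left(\frac{\Pr(x,y)}{\Pr(x)\Pr(y)}\right).
\]
This is exactly the expression in Definition~\ref{def:fmif}, since a sum over $\sigma\in\Sigma$ is the same as a double sum over $x,y$. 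So the identity follows term by term.

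The only point needing care is the set of pairs where $\Pr(x)\Pr(y)=0$, because the ratio inside $f$ is then undefined. I would restrict both sums to the support of $X\otimes Y$: if $\Pr(x)=0$ or $\Pr(y)=0$, then $\Pr(x,y)=0$ as well, since it is bounded by each marginal. Hence $XY$ is absolutely continuous with respect to $X\otimes Y$, and neither side ever meets the case $q=0<p$. On the excluded pairs, both definitions use the same convention $0\cdot f(0/0)=0$. With that convention fixed, the two sums coincide exactly, including the boundary terms.

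Finally, as a sanity check I would specialize to $f(t)=t\log t$. This recovers $\mi(X;Y)=d_{\mathrm{KL}}(XY\,\|\,X\otimes Y)$, which is consistent with how the main text derives the dual form in \eqref{eq:dual}. There is no real obstacle here; the only step worth writing out is the support and convention remark above.
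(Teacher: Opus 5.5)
Your proof is correct and is the intended argument: the paper gives no proof of this lemma, treating it as immediate from Definitions~\ref{def:fmif} and~\ref{def:df}, and unfolding both with $\mathbf{p}=XY$, $\mathbf{q}=X\otimes Y$ yields the identical sum. Your support remark is a worthwhile addition that the paper leaves implicit. It shows that $\Pr(x)\Pr(y)=0$ forces $\Pr(x,y)=0$, so the only convention needed is the shared $0\cdot f(0/0)=0$.
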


Now we introduce the dual form of $f$-divergence. Due to limited space, we defer the details of duality to the appendix.

\begin{lemma}[Dual Form of $f$-Divergence \citep{nguyen2010estimating}]\label{lemma:fdual}
\[
d_f(\mathbf{p};\mathbf{q}) = \sup_{D\in \mathcal{D}} \E_{\mathbf{p}} \partial{f}(D)- \E_{\mathbf{q}}f^{\star}(\partial{f}(D))
\]
where $\mathcal{D}$ is a set of functions that maps $\Sigma$ to $\mathbb{R}$ and the best $D^*$ satisfies $D^*(\sigma)=\frac{\mathbf{p}(\sigma)}{\mathbf{q}(\sigma)}$. $f^*$ is the Fenchel Duality of $f$.
\end{lemma}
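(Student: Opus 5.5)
The plan is to reduce the identity to a pointwise statement about Fenchel--Young, applied separately at each $\sigma\in\Sigma$, and then sum against $\mathbf{q}$. Throughout I assume $\mathbf{q}(\sigma)>0$ for every $\sigma$ (otherwise restrict $\Sigma$ to the support of $\mathbf{q}$ and assume $\mathbf{p}\ll\mathbf{q}$). I also read $\partial f(D)$ as any measurable selection $u(\sigma)\in\partial f(D(\sigma))$ of the subdifferential. Write $s(\sigma)=\mathbf{p}(\sigma)/\mathbf{q}(\sigma)$, so that $\E_{\mathbf{p}}u=\sum_\sigma \mathbf{q}(\sigma)\,s(\sigma)\,u(\sigma)$ and $d_f(\mathbf{p};\mathbf{q})=\sum_\sigma\mathbf{q}(\sigma)f(s(\sigma))$.

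\textbf{Step 1 (upper bound).} Take any $D\in\mathcal{D}$ and set $u=\partial f(D)$. By Definition~\ref{def:dual}, $f^{\star}(u)\ge u t-f(t)$ for every real $t$. Choosing $t=s(\sigma)$ gives
\[
s(\sigma)\,u(\sigma)-f^{\star}(u(\sigma))\le f(s(\sigma)) \quad\text{for each }\sigma .
\]
Multiplying by $\mathbf{q}(\sigma)\ge 0$ and summing yields $\E_{\mathbf{p}}u-\E_{\mathbf{q}}f^{\star}(u)\le d_f(\mathbf{p};\mathbf{q})$. This is exactly the inequality of Lemma~\ref{lemma:fdual-ineq} specialised to the class $\{\partial f(D):D\in\mathcal{D}\}$, so I may simply invoke that lemma. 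It follows that the supremum is at most $d_f$.

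\textbf{Step 2 (attainment at $D^*$).} Set $D^*(\sigma)=s(\sigma)$ and $u^*(\sigma)\in\partial f(s(\sigma))$. For a convex $f$, $u\in\partial f(t)$ means $t$ maximises $t'\mapsto ut'-f(t')$. Hence $f^{\star}(u^*(\sigma))=u^*(\sigma)s(\sigma)-f(s(\sigma))$; this is the Fenchel--Young equality case. Substituting, each summand equals $\mathbf{q}(\sigma)f(s(\sigma))$, so the objective equals $d_f(\mathbf{p};\mathbf{q})$. Together with Step 1, this shows both that the supremum equals $d_f$ and that it is attained at $D^*=\mathbf{p}/\mathbf{q}$. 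The equality condition in Lemma~\ref{lemma:fdual-ineq} confirms the maximiser is characterised by $\partial f(D)\ni\partial f(\mathbf{p}/\mathbf{q})$. When $f$ is strictly convex, $\partial f$ is injective, so $D^*$ is unique.

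\textbf{Step 3 (sanity check and obstacle).} As a check, take $f(t)=t\log t$. Then $\partial f(D)=\log D+1$ and $f^{\star}(u)=e^{u-1}$, so $f^{\star}(\partial f(D))=D$. The formula becomes $\sup_D \E_{\mathbf{p}}[\log D+1]-\E_{\mathbf{q}}[D]$, which recovers equation~\eqref{eq:dual} with optimiser $e^{\pmi}$.

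The main obstacle is technical rather than conceptual. It consists of (i) points where $\mathbf{q}(\sigma)=0$ or $s(\sigma)$ lies on the boundary of the domain of $f$, where $\partial f$ may be empty (e.g.\ $t=0$ for $t\log t$), and (ii) keeping $f^{\star}$ finite along the selected subgradients. I would first handle (i) by the support restriction above. For points with $\mathbf{p}(\sigma)=0$, I would approximate by $D_\epsilon=\max(s,\epsilon)$ and let $\epsilon\to0$, using continuity of $f$ on its domain. This shows the supremum is still reached in the limit.
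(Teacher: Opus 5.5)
Your proof is correct and follows the route the paper relies on. The paper gives no argument of its own and only cites Nguyen et al., but the statement is exactly Lemma~\ref{lemma:fdual-ineq} restricted to $u=\partial f(D)$ together with its equality condition $u(\sigma)\in\partial f(\mathbf{p}(\sigma)/\mathbf{q}(\sigma))$, which is what your Fenchel--Young Steps~1--2 establish. Your boundary analysis refines the statement rather than exposing a gap: when $\mathbf{p}(\sigma)=0$ and $f(t)=t\log t$ the supremum is only approached, so the paper's claim that $D^*=\mathbf{p}/\mathbf{q}$ is the maximiser holds only in that limiting sense. Your worry (ii) is moot, since any $u\in\partial f(t)$ with $f(t)$ finite has $f^{\star}(u)=ut-f(t)<\infty$.
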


Some common $f$ functions for $f$-divergence and their dual forms are shown in ~\cref{table:distinguishers}.

\begin{table*}[htp]
\caption{Reference Table for $f,f^{\star}$ \protect\citep{kong2019information} }
\begin{center}
\begin{tabular}{llll}
\toprule
    {$f$-Divergence} & {$f(t)$} & {$\partial{f}(D)$} & {$f^{\star}(\partial{f}(D)$)} \\ \midrule\midrule
    Total Variation Distance  & $|t-1|$  & $\sign(\log D)$ & $\sign(\log D)$ \\ 
    \midrule
    KL-Divergence & $t\log t$  & $\log D + 1$ & $D$ \\ 
    \midrule
    Pearson $\chi^2$ & $(t-1)^2$  & $2(D-1)$ & $D^2-1$ \\ 
    \midrule
    Jensen-Shannon & $-(t+1)\log{\frac{t+1}{2}}+t\log t$  & $\log{\frac{2D}{1+D}}$ & $-\log(\frac{2}{1+D})$ \\
    \midrule
    Squared Hellinger &$(\sqrt{t}-1)^2$ & $1-\sqrt{\frac{1}{D}}$ & $\sqrt{D}-1$ \\
    \bottomrule
\end{tabular}
\end{center}
\label{table:distinguishers}
\end{table*}

\section{Omitted Proofs}
\label{apx:proof}
\begin{proof}[Proof of ~\Cref{thm:convergence}]

    We prove the theorem by two lemmas from \cite{belghazi2018mutual}. 
    \begin{lemma}[universal approximation \citep{belghazi2018mutual}]
        For any $\epsilon>0$, there exists a neural network parametered by $\theta\in\Theta$ to approximate the optimal mutual information. Suppose \\ $\sup_{\theta\in\Theta} -L_\mi(\theta)=\mi(\Theta)$, then
            $$|\mi(\Theta)-MI(C,R)|<\epsilon, a.e.$$
    \end{lemma}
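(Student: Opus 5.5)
We prove the two directions of $|\mi(\Theta)-\mi(C,R)|<\epsilon$ separately. Throughout, $\mi(\Theta)$ is read as the supremum of the \emph{full} dual objective $J(D)=\E_{\Pr(C,R)}[\log D+1]-\E_{\Pr(C)\Pr(R)}[D]$ over $D=\nn_\theta\circ\phi$. Since $J(\nn_\theta\circ\phi)=-L_{\mi}(\theta)+1$, this differs from $\sup_\theta -L_{\mi}(\theta)$ only by the additive constant $1$ that the paper drops. We also assume $\mi(C,R)<\infty$.

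\textbf{Upper bound.} This direction is automatic. Every $\nn_\theta\circ\phi$ with positive output (for instance $\nn_\theta=\exp(\mathrm{MLP}_\theta)$) belongs to $\mathcal{D}$. The dual form of mutual information (equation~\ref{eq:dual}) then gives $J(\nn_\theta\circ\phi)\le \mi(C,R)$ for every $\theta$. Hence $\mi(\Theta)\le\mi(C,R)$, and it remains to find a single $\theta$ with $J(\nn_\theta\circ\phi)>\mi(C,R)-\epsilon$.

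\textbf{Step 1: the optimal critic is a function of the embedding.} The optimal critic is $D^\star=e^{\pmi}$. By the Semantic Embedding Assumption, pairs with the same embedding have the same meaning, so $D^\star$ is constant on fibres of $\phi$. We will therefore treat $D^\star=g\circ\phi$ for a measurable $g:\mathbb{R}^m\to\mathbb{R}_{>0}$. This identification is the one place the assumption is used.

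\textbf{Step 2: truncation.} For $M>1$ set $g_M=\min(\max(g,1/M),M)$. We show $J(g_M\circ\phi)\to J(D^\star)=\mi(C,R)$ as $M\to\infty$.
\begin{itemize}
\item \emph{The $\Pr(C,R)$-term.} We have $|\log g_M|\le|\log g|$ and $\log g_M\to\log g$ pointwise. Integrability of $|\pmi|$ under the joint follows from finite MI, since $\E[\pmi^-]\le 1/e$. Dominated convergence then gives convergence of this term.
\item \emph{The product-measure term.} We have $g_M\le \max(g,1)\le g+1$, and $\E_{\Pr(C)\Pr(R)}[g]=1$. Dominated convergence applies again.
\end{itemize}
Fix $M$ with $J(g_M\circ\phi)>\mi(C,R)-\epsilon/3$.

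\textbf{Step 3: neural approximation.} Let $h=\log g_M$, which is bounded by $\log M$. Let $\mu$ be the average of the pushforwards of $\Pr(C,R)$ and $\Pr(C)\Pr(R)$ under $\phi$. By Lusin's theorem on a compact set of $\mu$-mass at least $1-\delta$, followed by the universal approximation theorem for MLPs (Hornik), there is an MLP $u_\theta$ with two properties:
\begin{itemize}
\item $|u_\theta-h|<\delta$ on the compact set;
\item $u_\theta$ is clipped to $[-\log M-1,\log M+1]$, for instance through the paper's softcap.
\end{itemize}
Consequently $\|u_\theta-h\|_{L^1(\mu)}=O(\delta\log M)$. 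Set $\nn_\theta=e^{u_\theta}$. On the bounded range, $\exp$ is Lipschitz with constant $eM$. This gives
\[
|J(\nn_\theta\circ\phi)-J(g_M\circ\phi)|\le 2\bigl(1+eM\bigr)\|u_\theta-h\|_{L^1(\mu)},
\]
which is below $\epsilon/3$ for $\delta$ small. Combining with Step 2 yields $J(\nn_\theta\circ\phi)>\mi(C,R)-\epsilon$, which together with the upper bound proves the lemma.

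\textbf{Main obstacle.} The delicate step is Step 2. The log-ratio is unbounded in both directions, and the two expectations are taken under different measures, so one must verify that truncation loses little under each measure simultaneously. The domination argument above is the first thing I would try.

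Step 3 is routine once everything is bounded. Its only subtlety is that one approximating function must be good in $L^1$ of both measures, which is handled by approximating in $L^1$ of their average $\mu$.
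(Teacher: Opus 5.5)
Your analytic core is sound. It is essentially the argument the paper only cites (Lemma 1 of MINE), adapted to the NWJ objective: truncate the optimal critic, approximate it by Hornik's theorem, and control the product-measure term through the Lipschitz constant of $\exp$ on a bounded range. Your handling of the dropped $+1$ is also right. Read literally, $\sup_\theta -L_{\mi}(\theta)\le \mi(C,R)-1$, so the statement is only true for the full objective $J$, as you read it.

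\textbf{The genuine gap is Step~1.} The Semantic Embedding Assumption says that pairs with the same meaning get the same embedding, i.e.\ $\phi$ is constant on meaning classes. You use the converse, that equal embeddings force equal meaning. You also tacitly use that equal meaning forces equal $\pmi$, although $C,R$ are text-valued. Neither is given. What the lemma needs is that every fibre of $\phi$ lies inside a level set of $\pmi$.

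This is not a technicality. A constant $\phi$ satisfies the stated assumption. Then every $\nn_\theta\circ\phi$ is a constant $a>0$, so $J\le\sup_{a>0}(\log a+1-a)=0$, and the lemma fails as soon as $\mi(C,R)\ge\epsilon$. More generally, let $P_\phi,Q_\phi$ be the pushforwards of $\Pr(C,R)$ and $\Pr(C)\Pr(R)$ under $\phi$. Applying the dual form to them gives $\sup_g J(g\circ\phi)=d_{\mathrm{KL}}(P_\phi\,\|\,Q_\phi)\le\mi(C,R)$, with equality iff $D^\star$ is $\sigma(\phi)$-measurable. So the factorization of $\pmi$ through $\phi$ is necessary, and you must state it as an explicit hypothesis. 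MINE never meets this issue because its critics act on the raw pair.

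Once you assume it, measurability of $g$ is automatic on a countable text space, and the rest of your proof goes through. Note that $g$ must be allowed to vanish where $\Pr(c,r)=0<\Pr(c)\Pr(r)$; your truncation at $1/M$ already absorbs this.

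\textbf{A smaller point concerns Step~3.} The paper's softcap $20\tanh(x/20)$ cannot supply your clipping, because its level is fixed at $20$ rather than $\log M+1$. With any fixed cap the lemma is actually false. Write $T=\log\nn_\theta$, $T^\star=\pmi$ and $\psi(x)=x-1+e^{-x}\ge 0$. Then $\mi(C,R)-J(e^{T})\ge \E_{\Pr(C,R)}[\psi(T^\star-T)]$. Once $\pmi>20$ on a set of positive joint mass, this stays bounded away from $0$ uniformly in $\theta$. The clipping level must therefore be a free architectural parameter, for example $\max(\min(x,a),-a)$ with $a=\log M+1$, which a layer of (P)ReLU units realizes exactly.
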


    \begin{lemma}[estimation \citep{belghazi2018mutual}]
    Suppose \\ $\sup_{\theta\in\Theta} -L_{\mi,D^(n)}(\theta)=\mi_n(\Theta)$. For any $\epsilon>0$, there exists an integer $N$ such that for any $n>N$, 
        $$|\mi_n(\Theta)-\mi(\Theta)|<\epsilon, a.e.,$$
        where $-L_{\mi,D^{(n)}}(\theta)$ is the loss function defined in the dataset $D^{(n)}$ with size $n$.
    \end{lemma}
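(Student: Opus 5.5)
The plan is to reduce the statement to a uniform strong law of large numbers over the parameter class $\Theta$. Write the population objective as
\[
A(\theta)=\E_{(c,r)\sim\Pr(C,R)}\big[\log \nn_\theta(\phi(p^*,(c,r)))\big]-\E_{(c,r)\sim\Pr(C)\Pr(R)}\big[\nn_\theta(\phi(p^*,(c,r)))\big],
\]
so that $\mi(\Theta)=\sup_{\theta\in\Theta}A(\theta)$. Write $A_n(\theta)=-L_{\mi,D^{(n)}}(\theta)$ for its empirical counterpart, so that $\mi_n(\Theta)=\sup_\theta A_n(\theta)$. The elementary inequality
\[
\Big|\sup_\theta A_n(\theta)-\sup_\theta A(\theta)\Big|\le \sup_{\theta\in\Theta}|A_n(\theta)-A(\theta)|
\]
then shows it suffices to prove $\sup_{\theta}|A_n(\theta)-A(\theta)|\to 0$ almost surely.

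\textbf{Assumptions.} I would make explicit the regularity conditions used by \citet{belghazi2018mutual}.
\begin{itemize}
\item $\Theta$ is compact.
\item The embeddings $\phi(p^*,(c,r))$ take values in a bounded set $K\subset\mathbb{R}^m$.
\item The softcap keeps $\log\nn_\theta$ inside $[-M,M]$, so that $\nn_\theta\in[e^{-M},e^{M}]$.
\item $\theta\mapsto\nn_\theta(x)$ is Lipschitz, uniformly in $x\in K$, with some constant $L_\Theta$. This holds for an MLP with continuous activations and weights in a compact set.
\end{itemize}
Under these conditions both integrands $g_\theta=\log\nn_\theta$ and $h_\theta=\nn_\theta$ are bounded. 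Both are also Lipschitz in $\theta$, with constants $L_\Theta e^{M}$ and $L_\Theta e^{M}$, since $1/\nn_\theta\le e^{M}$.

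\textbf{Uniform convergence via a finite net.} Given $\epsilon$, I choose a finite $\delta$-net $\{\theta_1,\dots,\theta_k\}$ of $\Theta$ with $\delta=\epsilon/(4L_\Theta e^{M})$. For any $\theta$ with nearest net point $\theta_l$, both $|A(\theta)-A(\theta_l)|$ and $|A_n(\theta)-A_n(\theta_l)|$ are at most $\epsilon/4$. It therefore remains to show $\max_l|A_n(\theta_l)-A(\theta_l)|<\epsilon/2$ for all large $n$, almost surely.

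\textbf{Positive term at each net point.} By the I.I.D. Samples Assumption, $\frac1n\sum_i g_{\theta_l}(c_i,r_i)$ is an average of bounded i.i.d. variables, so the strong law gives almost-sure convergence to its expectation under $\Pr(C,R)$. The Semantic Embedding Assumption ensures these are well-defined functions of the semantic variables $C,R$.

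\textbf{Negative term at each net point.} This is the main obstacle, because negative pairs $(c_i,r_j)$ reuse samples and are not independent. For the full set $i\ne j$, the average $\frac{1}{n(n-1)}\sum_{i\ne j}h_{\theta_l}(c_i,r_j)$ is a U-statistic with bounded kernel. Since $c_i$ and $r_j$ for $i\neq j$ come from independent draws, each summand has mean $\E_{\Pr(C)\Pr(R)}[h_{\theta_l}]$. Hoeffding's strong law for U-statistics then gives almost-sure convergence.

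If instead a fixed number of random mismatches is sampled per context, as the algorithm actually does, I would first condition on the data. Given the data, the sampled negatives are independent draws from the empirical product measure, so Hoeffding's inequality bounds their deviation from the full U-statistic by $2e^{M}\exp(-cn\epsilon^2)$. This bound is summable in $n$, so Borel--Cantelli shows the gap vanishes almost surely. Combining this with the U-statistic limit handles the sampled case.

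\textbf{Conclusion.} A union over the finitely many net points preserves almost-sure convergence. Hence there is an $N$ beyond which $\max_l|A_n(\theta_l)-A(\theta_l)|<\epsilon/2$. Adding the $\epsilon/4+\epsilon/4$ discretization error gives $\sup_\theta|A_n-A|<\epsilon$, and therefore $|\mi_n(\Theta)-\mi(\Theta)|<\epsilon$ almost surely.
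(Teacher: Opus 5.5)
Your argument is correct. Its skeleton is the one behind the result the paper relies on: bound $|\mi_n(\Theta)-\mi(\Theta)|$ by a uniform deviation, then invoke a uniform strong law over a compact parameter set. The paper gives no argument of its own and simply cites the estimation lemma of Belghazi et al.\ (MINE). Your route differs in three useful ways.

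\begin{enumerate}
\item You work directly with the NWJ objective $\E_{+}[\log\nn_\theta]-\E_{-}[\nn_\theta]$ that the paper actually optimizes. It is linear in the two empirical measures, so the Lipschitz-of-$\log$ step MINE needs for its Donsker--Varadhan term $\log\E_{Q}[e^{T_\theta}]$ disappears, and the softcap supplies the uniform bound.
\item You prove the uniform law by hand with a finite net. That costs you bounded embeddings and Lipschitzness in $\theta$; a cited uniform law would need only continuity in $\theta$ plus the constant envelope.
\item Most substantively, you address the fact that the mismatched pairs $(c_i,r_j)$ are not i.i.d.\ draws from $\Pr(C)\Pr(R)$. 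You handle this with the U-statistic strong law (after symmetrizing the kernel) and a conditional Hoeffding plus Borel--Cantelli step for subsampled negatives. The uniform law MINE invokes is for empirical measures of i.i.d.\ samples, so the citation silently skips this point. Your version is the one that actually covers the estimator defined in this paper.
\end{enumerate}

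A few small repairs are needed.
\begin{itemize}
\item With both Lipschitz constants bounded by $L_\Theta e^{M}$ and $\delta=\epsilon/(4L_\Theta e^{M})$, each discretization error is at most $\epsilon/2$, not $\epsilon/4$. Take $\delta=\epsilon/(8L_\Theta e^{M})$ instead.
\item Continuous activations give only equicontinuity in $\theta$, not Lipschitzness. That is still enough for the net argument, and PReLU is Lipschitz anyway.
\item The $N$ you obtain depends on the sample path, so what you have proved is almost-sure convergence. That is the only tenable reading of the statement, since a deterministic $N$ is impossible. In the discrete setting, for every fixed $n$ there is positive probability that all $n$ samples equal a single pair. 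Every mismatched pair is then that same pair, so $\mi_n(\Theta)\le\sup_{t>0}(\log t-t)=-1$. Meanwhile $\mi(\Theta)$ approaches $\mi(C;R)-1>-1$ for a rich enough class.
\end{itemize}
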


    Combining these lemmas and using the triangle inequality, we have     
    $$\E[\RelScore(c,r)]\;\longrightarrow\; \mi(C,R).$$

    Since $\E[\RelScore(c,r)]\le\mi(C,R)$ and the equation holds only if $\RelScore(c,r)=\pmi(c,r)$. Furthermore, we have

    $$\RelScore(c,r)\;\longrightarrow\; \pmi(c,r).$$
\end{proof}
\section{Synthetic Study: Additional Details and Results}
\label{app:synthetic}
\subsection{Additional Details}
\paragraph{Datasets.}
We study three synthetic corpora that impose different dependency structures over context–response pairs: \emph{Diagonal}, \emph{Block}, and \emph{Independent}.  
For each dataset we predefine \textbf{20} context prototypes $\{\tilde c_i\}_{i=1}^{20}$ and \textbf{20} response prototypes $\{\tilde r_j\}_{j=1}^{20}$ (see code’s \texttt{prototypes} for concrete strings). A dataset-specific joint distribution over indices $(i,j)\in\{1,\ldots,20\}^2$ governs sampling:
\begin{itemize}
\item \textbf{Diagonal:} mass concentrated on $i=j$ (approximately one-to-one alignment);
\item \textbf{Independent:} product distribution $\pi(i,j)=\pi_X(i)\pi_Y(j)$ (no dependency);
\item \textbf{Block:} block-structured coupling between topical groups of contexts and generic supportive responses.
\end{itemize}
Each sampled prototype pair $(\tilde c_i,\tilde r_j)$ is then \textbf{paraphrased by an LLM} to produce $(c,r)$ to increase diversity; \emph{all} embeddings and training use the paraphrased $(c,r)$ (not the raw prototypes).  
This enlarges the support, better matches real text diversity, and reduces overfitting.  
Negatives are created by random mismatching within the same dataset; each positive is paired with \textbf{four} negatives.

\paragraph{Encoders and Representations.}
Each pair $(c,r)$ is encoded via a single prompt that forms one input sequence for a frozen LLM encoder. We use the following prompt verbatim:
\begin{quote}\small\ttfamily\raggedright
You are an assistant skilled at evaluating the relevance of a response to a given context.\\
Task: Evaluate the relevance of the following response to the context.\\
Context: \{context\_text\}\\
Response: \{response\_text\}\\
Result:
\end{quote}
The encoder processes this prompt (with \texttt{\{context\_text\}} and \texttt{\{response\_text\}} filled by $c$ and $r$) and outputs a single representation vector $\phi(c,r)$ for the \emph{pair}.  
All embeddings are produced locally (vLLM). Encoders: \texttt{Llama-3.2-3B-Instruct},\\ \texttt{Phi-4-mini-instruct}, \texttt{Qwen3-\{0.6B, 1.7B, 4B, 8B\}}.

\paragraph{Scoring Network.}
On top of the frozen encoder, we train a compact MLP head that maps $\phi(c,r)$ to a score $s_\theta(c,r)$:
\begin{align*}
&\text{Linear}(d, 256) \to \text{PReLU} \\
\to &\text{Linear}(256, 128) \to \text{PReLU}\\
\to &\text{Linear}(128, 1) \to \text{softcap}_{20}
\end{align*}

where $\text{softcap}_{20}(x)=20\tanh(x/20)$. 

\paragraph{Training Setup.}
We use AdamW. A unified rule sets the learning rate for \emph{all} encoders:
\[
\text{lr} \;=\; 1\times10^{-3}\times \frac{1024}{d},
\]
with $d$ the encoder embedding dimension. We train for \textbf{100} epochs with positive-batch size \textbf{256} and \textbf{4} negatives per positive ($K{=}4$). Model selection uses validation correlation against synthetic ground-truth PMI.

\paragraph{Objective Functions.}
Let $\mathcal{P}_+$ and $\mathcal{P}_-$ be the positive (true) and negative (mismatched) pair distributions. The model outputs $s_\theta(c,r)$ on the prompt-based pair embedding $\phi(c,r)$. Consistent with the implementation:
\begin{itemize}
    \item \textbf{PMIScore}
    \[
    \mathcal{L}_{\text{PMI}} \;=\; -\Big(\E_{(c,r)\sim\mathcal{P}_+}[s_\theta(c,r)] \;-\; \E_{(c,r^-)\sim\mathcal{P}_-}\!\big[\exp(s_\theta(c,r^-))\big]\Big).
    \]
    \item \textbf{InfoNCE}
    \[
    \mathcal{L}_{\text{NCE}} \;=\; -\,\E\!\left[\log \frac{\exp(s_\theta(c,r_0))}{\exp(s_\theta(c,r_0))+\sum_{k=1}^{K}\exp(s_\theta(c,r_k^-))}\right].
    \]
    \item \textbf{MINE}
    \[
    \mathcal{L}_{\text{MINE}} \;=\; -\Big(\E_{(c,r)\sim\mathcal{P}_+}[s_\theta(c,r)] \;-\; \log \E_{(c,r^-)\sim\mathcal{P}_-}\!\big[\exp(s_\theta(c,r^-))\big]\Big).
    \]
    \item \textbf{KDE} is a non-parametric baseline: 
    \[\mathrm{KDE}(c,r)=\log \hat p(c,r)-\log \hat p(c)\hat p(r)\] with Gaussian kernels and cross-validated bandwidths. At evaluation we use $\mathrm{KDE}$ as the score.
\end{itemize}

\begin{table*}[!ht]
\centering
\small
\setlength{\tabcolsep}{3.5pt}
\resizebox{\textwidth}{!}{%
\begin{tabular}{llcccccccccc}
\toprule
\multirow{2}{*}{Dataset} & \multirow{2}{*}{Model} & \multicolumn{2}{c}{PMIScore} & \multicolumn{2}{c}{MINE} & \multicolumn{2}{c}{InfoNCE} & \multicolumn{2}{c}{KDE} & \multicolumn{2}{c}{MEEP} \\
 & & $\rho$ & MSE & $\rho$ & MSE & $\rho$ & MSE & $\rho$ & MSE & $\rho$ & MSE \\
\midrule
\multirow{9}{*}{Block}
 & Phi-4-mini-instruct & \makecell{\textbf{0.802} \\[-3pt] \scriptsize (0.017)} & \makecell{\textbf{0.350} \\[-3pt] \scriptsize (0.070)} & \makecell{0.754 \\[-3pt] \scriptsize (0.023)} & \makecell{2.520 \\[-3pt] \scriptsize (3.499)} & \makecell{0.662 \\[-3pt] \scriptsize (0.042)} & \makecell{1.686 \\[-3pt] \scriptsize (1.042)} & \makecell{0.702 \\[-3pt] \scriptsize (0.016)} & \makecell{1.923 \\[-3pt] \scriptsize (0.036)} & \makecell{0.124 \\[-3pt] \scriptsize (0.021)} & \makecell{2.902 \\[-3pt] \scriptsize (0.038)}\\
 & Qwen3-0.6B & \makecell{0.470 \\[-3pt] \scriptsize (0.019)} & \makecell{\textbf{1.287} \\[-3pt] \scriptsize (0.424)} & \makecell{\textbf{0.533} \\[-3pt] \scriptsize (0.012)} & \makecell{14.253 \\[-3pt] \scriptsize (16.473)} & \makecell{0.370 \\[-3pt] \scriptsize (0.016)} & \makecell{12.089 \\[-3pt] \scriptsize (13.000)} & \makecell{0.390 \\[-3pt] \scriptsize (0.009)} & \makecell{4.268 \\[-3pt] \scriptsize (0.019)} & \makecell{-0.002 \\[-3pt] \scriptsize (0.043)} & \makecell{3.119 \\[-3pt] \scriptsize (0.079)}\\
 & Qwen3-1.7B & \makecell{\textbf{0.649} \\[-3pt] \scriptsize (0.038)} & \makecell{\textbf{0.692} \\[-3pt] \scriptsize (0.331)} & \makecell{0.578 \\[-3pt] \scriptsize (0.019)} & \makecell{12.457 \\[-3pt] \scriptsize (8.151)} & \makecell{0.464 \\[-3pt] \scriptsize (0.027)} & \makecell{5.630 \\[-3pt] \scriptsize (3.639)} & \makecell{0.550 \\[-3pt] \scriptsize (0.021)} & \makecell{2.677 \\[-3pt] \scriptsize (0.028)} & \makecell{0.037 \\[-3pt] \scriptsize (0.042)} & \makecell{3.058 \\[-3pt] \scriptsize (0.069)}\\
 & Qwen3-4B & \makecell{\textbf{0.777} \\[-3pt] \scriptsize (0.017)} & \makecell{\textbf{0.371} \\[-3pt] \scriptsize (0.064)} & \makecell{0.742 \\[-3pt] \scriptsize (0.029)} & \makecell{2.080 \\[-3pt] \scriptsize (2.841)} & \makecell{0.639 \\[-3pt] \scriptsize (0.022)} & \makecell{0.850 \\[-3pt] \scriptsize (0.053)} & \makecell{0.638 \\[-3pt] \scriptsize (0.009)} & \makecell{2.059 \\[-3pt] \scriptsize (0.061)} & \makecell{0.096 \\[-3pt] \scriptsize (0.045)} & \makecell{2.964 \\[-3pt] \scriptsize (0.075)}\\
 & Qwen3-8B & \makecell{\textbf{0.811} \\[-3pt] \scriptsize (0.010)} & \makecell{\textbf{0.337} \\[-3pt] \scriptsize (0.047)} & \makecell{0.808 \\[-3pt] \scriptsize (0.019)} & \makecell{0.398 \\[-3pt] \scriptsize (0.075)} & \makecell{0.750 \\[-3pt] \scriptsize (0.011)} & \makecell{0.702 \\[-3pt] \scriptsize (0.262)} & \makecell{0.632 \\[-3pt] \scriptsize (0.024)} & \makecell{2.533 \\[-3pt] \scriptsize (0.104)} & \makecell{0.191 \\[-3pt] \scriptsize (0.035)} & \makecell{2.789 \\[-3pt] \scriptsize (0.077)}\\
 & Llama-3.2-3B-Instruct & \makecell{\textbf{0.771} \\[-3pt] \scriptsize (0.017)} & \makecell{\textbf{0.404} \\[-3pt] \scriptsize (0.106)} & \makecell{0.752 \\[-3pt] \scriptsize (0.014)} & \makecell{1.055 \\[-3pt] \scriptsize (1.130)} & \makecell{0.639 \\[-3pt] \scriptsize (0.061)} & \makecell{3.618 \\[-3pt] \scriptsize (3.860)} & \makecell{0.664 \\[-3pt] \scriptsize (0.017)} & \makecell{1.986 \\[-3pt] \scriptsize (0.065)} & \makecell{0.128 \\[-3pt] \scriptsize (0.027)} & \makecell{2.893 \\[-3pt] \scriptsize (0.050)}\\
\midrule
\multirow{9}{*}{Diagonal}
 & Phi-4-mini-instruct & \makecell{\textbf{0.664} \\[-3pt] \scriptsize (0.009)} & \makecell{\textbf{1.728} \\[-3pt] \scriptsize (0.251)} & \makecell{0.618 \\[-3pt] \scriptsize (0.010)} & \makecell{44.496 \\[-3pt] \scriptsize (11.139)} & \makecell{0.561 \\[-3pt] \scriptsize (0.044)} & \makecell{2.786 \\[-3pt] \scriptsize (1.086)} & \makecell{0.609 \\[-3pt] \scriptsize (0.021)} & \makecell{5.167 \\[-3pt] \scriptsize (0.095)} & \makecell{0.120 \\[-3pt] \scriptsize (0.034)} & \makecell{5.781 \\[-3pt] \scriptsize (0.083)}\\
 & Qwen3-0.6B & \makecell{\textbf{0.525} \\[-3pt] \scriptsize (0.012)} & \makecell{\textbf{3.210} \\[-3pt] \scriptsize (0.934)} & \makecell{0.518 \\[-3pt] \scriptsize (0.025)} & \makecell{58.645 \\[-3pt] \scriptsize (91.388)} & \makecell{0.387 \\[-3pt] \scriptsize (0.026)} & \makecell{8.500 \\[-3pt] \scriptsize (6.521)} & \makecell{0.381 \\[-3pt] \scriptsize (0.010)} & \makecell{8.336 \\[-3pt] \scriptsize (0.041)} & \makecell{0.021 \\[-3pt] \scriptsize (0.022)} & \makecell{6.021 \\[-3pt] \scriptsize (0.054)}\\
 & Qwen3-1.7B & \makecell{\textbf{0.608} \\[-3pt] \scriptsize (0.008)} & \makecell{\textbf{2.545} \\[-3pt] \scriptsize (0.762)} & \makecell{0.589 \\[-3pt] \scriptsize (0.021)} & \makecell{52.908 \\[-3pt] \scriptsize (11.390)} & \makecell{0.545 \\[-3pt] \scriptsize (0.030)} & \makecell{9.884 \\[-3pt] \scriptsize (8.883)} & \makecell{0.392 \\[-3pt] \scriptsize (0.009)} & \makecell{6.756 \\[-3pt] \scriptsize (0.086)} & \makecell{0.048 \\[-3pt] \scriptsize (0.024)} & \makecell{5.926 \\[-3pt] \scriptsize (0.082)}\\
 & Qwen3-4B & \makecell{\textbf{0.629} \\[-3pt] \scriptsize (0.007)} & \makecell{\textbf{1.942} \\[-3pt] \scriptsize (0.257)} & \makecell{0.611 \\[-3pt] \scriptsize (0.014)} & \makecell{26.183 \\[-3pt] \scriptsize (9.451)} & \makecell{0.549 \\[-3pt] \scriptsize (0.015)} & \makecell{4.612 \\[-3pt] \scriptsize (1.841)} & \makecell{0.543 \\[-3pt] \scriptsize (0.007)} & \makecell{5.642 \\[-3pt] \scriptsize (0.080)} & \makecell{0.050 \\[-3pt] \scriptsize (0.035)} & \makecell{5.948 \\[-3pt] \scriptsize (0.161)}\\
 & Qwen3-8B & \makecell{\textbf{0.659} \\[-3pt] \scriptsize (0.008)} & \makecell{\textbf{1.769} \\[-3pt] \scriptsize (0.207)} & \makecell{0.637 \\[-3pt] \scriptsize (0.007)} & \makecell{7.802 \\[-3pt] \scriptsize (3.534)} & \makecell{0.598 \\[-3pt] \scriptsize (0.036)} & \makecell{2.397 \\[-3pt] \scriptsize (1.459)} & \makecell{0.594 \\[-3pt] \scriptsize (0.010)} & \makecell{5.115 \\[-3pt] \scriptsize (0.114)} & \makecell{0.321 \\[-3pt] \scriptsize (0.029)} & \makecell{5.231 \\[-3pt] \scriptsize (0.067)}\\
 & Llama-3.2-3B-Instruct & \makecell{\textbf{0.647} \\[-3pt] \scriptsize (0.017)} & \makecell{\textbf{2.093} \\[-3pt] \scriptsize (0.535)} & \makecell{0.634 \\[-3pt] \scriptsize (0.017)} & \makecell{12.449 \\[-3pt] \scriptsize (2.647)} & \makecell{0.598 \\[-3pt] \scriptsize (0.028)} & \makecell{2.400 \\[-3pt] \scriptsize (0.456)} & \makecell{0.590 \\[-3pt] \scriptsize (0.004)} & \makecell{5.028 \\[-3pt] \scriptsize (0.129)} & \makecell{0.200 \\[-3pt] \scriptsize (0.021)} & \makecell{5.578 \\[-3pt] \scriptsize (0.053)}\\
\midrule
\multirow{9}{*}{Independent}
 & Phi-4-mini-instruct & \makecell{\textbf{0.110} \\[-3pt] \scriptsize (0.009)} & \makecell{\textbf{0.255} \\[-3pt] \scriptsize (0.035)} & \makecell{0.098 \\[-3pt] \scriptsize (0.014)} & \makecell{27.308 \\[-3pt] \scriptsize (52.609)} & \makecell{0.079 \\[-3pt] \scriptsize (0.026)} & \makecell{5.817 \\[-3pt] \scriptsize (6.019)} & \makecell{0.014 \\[-3pt] \scriptsize (0.011)} & \makecell{3.334 \\[-3pt] \scriptsize (0.130)} & \makecell{-0.009 \\[-3pt] \scriptsize (0.029)} & \makecell{1.227 \\[-3pt] \scriptsize (0.026)}\\
 & Qwen3-0.6B & \makecell{0.018 \\[-3pt] \scriptsize (0.034)} & \makecell{\textbf{0.228} \\[-3pt] \scriptsize (0.009)} & \makecell{\textbf{0.061} \\[-3pt] \scriptsize (0.013)} & \makecell{11.448 \\[-3pt] \scriptsize (9.988)} & \makecell{0.034 \\[-3pt] \scriptsize (0.017)} & \makecell{23.007 \\[-3pt] \scriptsize (40.962)} & \makecell{0.021 \\[-3pt] \scriptsize (0.019)} & \makecell{1.479 \\[-3pt] \scriptsize (0.007)} & \makecell{-0.008 \\[-3pt] \scriptsize (0.013)} & \makecell{1.231 \\[-3pt] \scriptsize (0.011)}\\
 & Qwen3-1.7B & \makecell{0.076 \\[-3pt] \scriptsize (0.027)} & \makecell{\textbf{0.252} \\[-3pt] \scriptsize (0.013)} & \makecell{\textbf{0.128} \\[-3pt] \scriptsize (0.032)} & \makecell{21.328 \\[-3pt] \scriptsize (16.693)} & \makecell{0.045 \\[-3pt] \scriptsize (0.033)} & \makecell{7.844 \\[-3pt] \scriptsize (11.021)} & \makecell{0.001 \\[-3pt] \scriptsize (0.021)} & \makecell{1.959 \\[-3pt] \scriptsize (0.032)} & \makecell{-0.006 \\[-3pt] \scriptsize (0.029)} & \makecell{1.223 \\[-3pt] \scriptsize (0.025)}\\
 & Qwen3-4B & \makecell{0.046 \\[-3pt] \scriptsize (0.018)} & \makecell{\textbf{0.256} \\[-3pt] \scriptsize (0.006)} & \makecell{\textbf{0.051} \\[-3pt] \scriptsize (0.017)} & \makecell{19.034 \\[-3pt] \scriptsize (25.263)} & \makecell{0.033 \\[-3pt] \scriptsize (0.016)} & \makecell{3.188 \\[-3pt] \scriptsize (3.290)} & \makecell{0.029 \\[-3pt] \scriptsize (0.018)} & \makecell{1.699 \\[-3pt] \scriptsize (0.018)} & \makecell{-0.025 \\[-3pt] \scriptsize (0.016)} & \makecell{1.242 \\[-3pt] \scriptsize (0.018)}\\
 & Qwen3-8B & \makecell{0.068 \\[-3pt] \scriptsize (0.027)} & \makecell{\textbf{0.253} \\[-3pt] \scriptsize (0.008)} & \makecell{\textbf{0.089} \\[-3pt] \scriptsize (0.030)} & \makecell{5.926 \\[-3pt] \scriptsize (7.212)} & \makecell{0.015 \\[-3pt] \scriptsize (0.023)} & \makecell{3.545 \\[-3pt] \scriptsize (1.001)} & \makecell{0.020 \\[-3pt] \scriptsize (0.021)} & \makecell{4.732 \\[-3pt] \scriptsize (0.186)} & \makecell{-0.027 \\[-3pt] \scriptsize (0.025)} & \makecell{1.246 \\[-3pt] \scriptsize (0.025)}\\
 & Llama-3.2-3B-Instruct & \makecell{0.039 \\[-3pt] \scriptsize (0.023)} & \makecell{\textbf{0.252} \\[-3pt] \scriptsize (0.010)} & \makecell{\textbf{0.119} \\[-3pt] \scriptsize (0.024)} & \makecell{3.357 \\[-3pt] \scriptsize (1.649)} & \makecell{0.036 \\[-3pt] \scriptsize (0.015)} & \makecell{3.571 \\[-3pt] \scriptsize (4.536)} & \makecell{0.022 \\[-3pt] \scriptsize (0.016)} & \makecell{2.617 \\[-3pt] \scriptsize (0.099)} & \makecell{0.013 \\[-3pt] \scriptsize (0.022)} & \makecell{1.209 \\[-3pt] \scriptsize (0.020)}\\
\bottomrule
\end{tabular}
}
\caption{Performance on Synthetic datasets. Best Pearson $\rho$ (higher) and MSE (lower) per model are \textbf{bolded}.}
\label{tab:synthetic}
\end{table*}

\subsection{Additional Results}
\paragraph{Metrics.}
We report Spearman’s $\rho$ (higher is better) and mean squared error (MSE; lower is better) between predicted scores and ground-truth synthetic PMI.

\paragraph{Results.}
Table~\ref{tab:synthetic} shows results for the four methods by dataset and encoder. Within each model group, the best $\rho$ (max) and MSE (min) are \textbf{boldfaced}. PMIScore is generally strongest, especially on \emph{Diagonal} and \emph{Block}.
\section{Empirical Study: Additional Details and Results}
\label{app:empirical}

\subsection{Additional Details}

\paragraph{Datasets and splits.}
We evaluate on the multilingual dialogue collections from DSTC-11 Track~4 (English \texttt{dstc\_en}, Chinese \texttt{dstc\_zh}; see \S\ref{sec:empirical}). 
Per language we draw $3{,}000/1{,}000/1{,}000$ context--response pairs for \emph{train}/\emph{validation}/\emph{test} from the Track~4 training pool, fixing a global seed ($42$) for reproducibility. 
Model selection uses \emph{validation AUC}; all retrieval numbers reported in the main text are on the held-out \emph{test} split.
Human correlation is computed only on the official human-annotated dev packs (English $n{=}375$, Chinese $n{=}1{,}589$).

\paragraph{Negative sampling.}
Each positive $(c,r^+)$ is paired with \textbf{four} negatives: \textbf{1 in-dialogue} negative (pair $c$ with a different turn from the \emph{same} dialogue that is not the gold continuation) plus \textbf{3 random} negatives (responses sampled from the same split but \emph{outside} the current dialogue). 
In-dialogue candidates are drawn only from responses observed within that dialogue in the same split to avoid leakage across splits. 
Random negatives are sampled without replacement within a mini-batch; overall class ratio is thus $1{:}4$ (pos{:}neg).

\paragraph{Preprocessing of contexts and responses.}
We keep multi-turn histories. For each dialogue with turns $\{u_1,\dots,u_i,\dots\}$, a positive pair is formed as
$\text{context}=u_1\!\oplus\!\cdots\!\oplus\!u_{i-1}$ (newline-joined, no speaker tags)
and $\text{response}=u_i$ (stripped of any speaker prefixes).%
\footnote{The loader normalizes common JSON/CSV variants for contexts (for example, \texttt{context}, \texttt{history}, and \texttt{dialog}) and responses (for example, \texttt{response}, \texttt{reference}, and \texttt{answer}). It also converts lists to text, removes empty lines, and trims whitespace.}
This yields realistic, variable-length contexts and naturally hard in-dialogue negatives.

\paragraph{Prompted pair encoding (frozen encoders).}
We encode each $(c,r)$ with a single \emph{pair-level} prompt to a frozen LLM encoder, producing one vector $\phi(c,r)$ per pair:
\begin{quote}\small\ttfamily\raggedright
You are an assistant skilled at evaluating the relevance of a response to a given context.\\
Task: Evaluate the relevance of the following response to the context.\\
Context: \{context\_text\}\\
Response: \{response\_text\}\\
Result:
\end{quote}
Unless otherwise noted, we use the \emph{English} template for \emph{both} languages for consistency.\footnote{A Chinese template is available in code but disabled by default.} 
We use vLLM (\texttt{task=embed}) to process the embeddings.

\paragraph{LLMs (frozen).}
Encoders: \texttt{Qwen3-\{0.6B,1.7B,4B,8B\}}, \\ \texttt{Llama-3.2-3B-Instruct},\\ \texttt{microsoft/Phi-4-mini-instruct}. 
We only train the small scorer head; encoders remain frozen throughout.

\paragraph{Scorer head (shared across methods).}
A lightweight MLP maps $\phi(c,r)\in\mathbb{R}^d$ to $s_\theta(c,r)\in\mathbb{R}$, the structure is the same as the MLP in synthetic study.

\paragraph{Objectives.}
Given positives $\mathcal{P}_+$ and (constructed) negatives $\mathcal{P}_-$, losses follow App.~\ref{app:synthetic}: 
PMIScore (dual-form), InfoNCE (batch-wise contrastive), MINE (DV bound). 
KDE is non-parametric: we fit StandardScaler~$\to$ optional PCA ($128$ dims)~$\to$ Gaussian KDE for $p_+$ and $p_-$ with an automatic bandwidth heuristic; the score is $\log p_+ - \log p_-$.

\paragraph{Optimization and early stopping.}
We use AdamW with learning rate $\text{lr}=10^{-3}\!\times\!(1024/d)$ where $d$ is the encoder embedding size. 
We train for up to $100$ epochs, with positive batch size $256$ and \texttt{NEG\_RATIO}$=4$ (so each \emph{positive} batch is matched with a \emph{negative} batch of size $4\times$ for InfoNCE). 
We select checkpoints by \emph{validation AUC} and report AUC on \emph{test}; Spearman is computed on human dev sets.

\paragraph{Metric computation details.}
\textbf{ROC-AUC} is computed over the binary labels (\texttt{is\_positive}) from the raw scores $s_\theta(c,r)$ (no thresholding). 
\textbf{Spearman’s $\rho$} is computed between $s_\theta(c,r)$ and the human relevance field \texttt{annot\_relevant\_mean}; ties are handled by the standard rank definition in \texttt{scipy}/\texttt{sklearn}. 
We report per-backbone results and also averages across backbones in the main figure.

\subsection{Complete Results (Test AUC)}
Table~\ref{tab:empirical} reports ROC-AUC on the response-ranking task for all methods, split by language and backbone (\emph{test only}).

\begin{table*}[!ht]
\centering
\small
\setlength{\tabcolsep}{3.5pt}
\resizebox{\textwidth}{!}{%
\begin{tabular}{llcccccccccc}
\toprule
\multirow{2}{*}{Dataset} & \multirow{2}{*}{Model} & \multicolumn{2}{c}{PMIScore} & \multicolumn{2}{c}{MINE} & \multicolumn{2}{c}{InfoNCE} & \multicolumn{2}{c}{KDE} & \multicolumn{2}{c}{MEEP} \\
 & & AUC & $\rho$ & AUC & $\rho$ & AUC & $\rho$ & AUC & $\rho$ & AUC & $\rho$ \\
\midrule
\multirow{9}{*}{English}
 & Phi-4-mini-instruct & \makecell{\textbf{0.922} \\[-3pt] \scriptsize (0.002)} & \makecell{0.407 \\[-3pt] \scriptsize (0.096)} & \makecell{0.910 \\[-3pt] \scriptsize (0.005)} & \makecell{0.379 \\[-3pt] \scriptsize (0.107)} & \makecell{0.919 \\[-3pt] \scriptsize (0.002)} & \makecell{0.436 \\[-3pt] \scriptsize (0.075)} & \makecell{0.881 \\[-3pt] \scriptsize (0.002)} & \makecell{\textbf{0.482} \\[-3pt] \scriptsize (0.052)} & \makecell{0.673 \\[-3pt] \scriptsize (0.010)} & \makecell{0.335 \\[-3pt] \scriptsize (0.085)}\\
 & Qwen3-0.6B & \makecell{\textbf{0.858} \\[-3pt] \scriptsize (0.006)} & \makecell{\textbf{0.286} \\[-3pt] \scriptsize (0.075)} & \makecell{0.829 \\[-3pt] \scriptsize (0.026)} & \makecell{0.258 \\[-3pt] \scriptsize (0.081)} & \makecell{0.830 \\[-3pt] \scriptsize (0.009)} & \makecell{0.175 \\[-3pt] \scriptsize (0.069)} & \makecell{0.781 \\[-3pt] \scriptsize (0.005)} & \makecell{0.214 \\[-3pt] \scriptsize (0.092)} & \makecell{0.531 \\[-3pt] \scriptsize (0.013)} & \makecell{0.129 \\[-3pt] \scriptsize (0.127)}\\
 & Qwen3-1.7B & \makecell{\textbf{0.903} \\[-3pt] \scriptsize (0.004)} & \makecell{\textbf{0.433} \\[-3pt] \scriptsize (0.055)} & \makecell{0.892 \\[-3pt] \scriptsize (0.003)} & \makecell{0.406 \\[-3pt] \scriptsize (0.092)} & \makecell{0.885 \\[-3pt] \scriptsize (0.006)} & \makecell{0.321 \\[-3pt] \scriptsize (0.078)} & \makecell{0.793 \\[-3pt] \scriptsize (0.004)} & \makecell{0.180 \\[-3pt] \scriptsize (0.060)} & \makecell{0.593 \\[-3pt] \scriptsize (0.018)} & \makecell{-0.018 \\[-3pt] \scriptsize (0.082)}\\
 & Qwen3-4B & \makecell{\textbf{0.941} \\[-3pt] \scriptsize (0.002)} & \makecell{\textbf{0.475} \\[-3pt] \scriptsize (0.028)} & \makecell{0.933 \\[-3pt] \scriptsize (0.003)} & \makecell{0.445 \\[-3pt] \scriptsize (0.024)} & \makecell{0.936 \\[-3pt] \scriptsize (0.006)} & \makecell{0.351 \\[-3pt] \scriptsize (0.090)} & \makecell{0.914 \\[-3pt] \scriptsize (0.003)} & \makecell{0.381 \\[-3pt] \scriptsize (0.076)} & \makecell{0.670 \\[-3pt] \scriptsize (0.016)} & \makecell{0.038 \\[-3pt] \scriptsize (0.110)}\\
 & Qwen3-8B & \makecell{\textbf{0.952} \\[-3pt] \scriptsize (0.003)} & \makecell{\textbf{0.467} \\[-3pt] \scriptsize (0.032)} & \makecell{0.947 \\[-3pt] \scriptsize (0.003)} & \makecell{0.462 \\[-3pt] \scriptsize (0.023)} & \makecell{0.952 \\[-3pt] \scriptsize (0.003)} & \makecell{0.414 \\[-3pt] \scriptsize (0.045)} & \makecell{0.926 \\[-3pt] \scriptsize (0.004)} & \makecell{0.365 \\[-3pt] \scriptsize (0.074)} & \makecell{0.798 \\[-3pt] \scriptsize (0.008)} & \makecell{0.173 \\[-3pt] \scriptsize (0.153)}\\
 & Llama-3.2-3B-Instruct & \makecell{\textbf{0.934} \\[-3pt] \scriptsize (0.003)} & \makecell{0.384 \\[-3pt] \scriptsize (0.069)} & \makecell{0.926 \\[-3pt] \scriptsize (0.005)} & \makecell{0.371 \\[-3pt] \scriptsize (0.058)} & \makecell{0.907 \\[-3pt] \scriptsize (0.009)} & \makecell{0.349 \\[-3pt] \scriptsize (0.104)} & \makecell{0.884 \\[-3pt] \scriptsize (0.002)} & \makecell{\textbf{0.393} \\[-3pt] \scriptsize (0.095)} & \makecell{0.653 \\[-3pt] \scriptsize (0.009)} & \makecell{0.105 \\[-3pt] \scriptsize (0.090)}\\
\midrule
\multirow{9}{*}{Chinese}
 & Phi-4-mini-instruct & \makecell{\textbf{0.824} \\[-3pt] \scriptsize (0.003)} & \makecell{\textbf{0.475} \\[-3pt] \scriptsize (0.047)} & \makecell{0.812 \\[-3pt] \scriptsize (0.006)} & \makecell{0.453 \\[-3pt] \scriptsize (0.045)} & \makecell{0.799 \\[-3pt] \scriptsize (0.006)} & \makecell{0.405 \\[-3pt] \scriptsize (0.037)} & \makecell{0.741 \\[-3pt] \scriptsize (0.002)} & \makecell{0.389 \\[-3pt] \scriptsize (0.037)} & \makecell{0.575 \\[-3pt] \scriptsize (0.011)} & \makecell{0.128 \\[-3pt] \scriptsize (0.060)}\\
 & Qwen3-0.6B & \makecell{0.774 \\[-3pt] \scriptsize (0.005)} & \makecell{\textbf{0.430} \\[-3pt] \scriptsize (0.023)} & \makecell{\textbf{0.776} \\[-3pt] \scriptsize (0.006)} & \makecell{0.404 \\[-3pt] \scriptsize (0.080)} & \makecell{0.761 \\[-3pt] \scriptsize (0.011)} & \makecell{0.388 \\[-3pt] \scriptsize (0.073)} & \makecell{0.682 \\[-3pt] \scriptsize (0.009)} & \makecell{0.170 \\[-3pt] \scriptsize (0.064)} & \makecell{0.531 \\[-3pt] \scriptsize (0.009)} & \makecell{0.080 \\[-3pt] \scriptsize (0.047)}\\
 & Qwen3-1.7B & \makecell{\textbf{0.812} \\[-3pt] \scriptsize (0.004)} & \makecell{\textbf{0.474} \\[-3pt] \scriptsize (0.056)} & \makecell{0.801 \\[-3pt] \scriptsize (0.006)} & \makecell{0.473 \\[-3pt] \scriptsize (0.058)} & \makecell{0.789 \\[-3pt] \scriptsize (0.002)} & \makecell{0.404 \\[-3pt] \scriptsize (0.043)} & \makecell{0.700 \\[-3pt] \scriptsize (0.004)} & \makecell{0.320 \\[-3pt] \scriptsize (0.023)} & \makecell{0.525 \\[-3pt] \scriptsize (0.011)} & \makecell{0.051 \\[-3pt] \scriptsize (0.057)}\\
 & Qwen3-4B & \makecell{\textbf{0.861} \\[-3pt] \scriptsize (0.004)} & \makecell{\textbf{0.552} \\[-3pt] \scriptsize (0.048)} & \makecell{0.852 \\[-3pt] \scriptsize (0.005)} & \makecell{0.535 \\[-3pt] \scriptsize (0.050)} & \makecell{0.846 \\[-3pt] \scriptsize (0.005)} & \makecell{0.415 \\[-3pt] \scriptsize (0.054)} & \makecell{0.797 \\[-3pt] \scriptsize (0.006)} & \makecell{0.495 \\[-3pt] \scriptsize (0.022)} & \makecell{0.582 \\[-3pt] \scriptsize (0.012)} & \makecell{0.142 \\[-3pt] \scriptsize (0.057)}\\
 & Qwen3-8B & \makecell{\textbf{0.884} \\[-3pt] \scriptsize (0.002)} & \makecell{\textbf{0.579} \\[-3pt] \scriptsize (0.049)} & \makecell{0.878 \\[-3pt] \scriptsize (0.002)} & \makecell{0.570 \\[-3pt] \scriptsize (0.053)} & \makecell{0.878 \\[-3pt] \scriptsize (0.003)} & \makecell{0.506 \\[-3pt] \scriptsize (0.068)} & \makecell{0.814 \\[-3pt] \scriptsize (0.002)} & \makecell{0.433 \\[-3pt] \scriptsize (0.042)} & \makecell{0.669 \\[-3pt] \scriptsize (0.014)} & \makecell{0.217 \\[-3pt] \scriptsize (0.046)}\\
 & Llama-3.2-3B-Instruct & \makecell{\textbf{0.808} \\[-3pt] \scriptsize (0.005)} & \makecell{\textbf{0.470} \\[-3pt] \scriptsize (0.062)} & \makecell{0.797 \\[-3pt] \scriptsize (0.007)} & \makecell{0.455 \\[-3pt] \scriptsize (0.067)} & \makecell{0.781 \\[-3pt] \scriptsize (0.010)} & \makecell{0.235 \\[-3pt] \scriptsize (0.057)} & \makecell{0.683 \\[-3pt] \scriptsize (0.007)} & \makecell{0.243 \\[-3pt] \scriptsize (0.055)} & \makecell{0.540 \\[-3pt] \scriptsize (0.006)} & \makecell{0.063 \\[-3pt] \scriptsize (0.019)}\\
\bottomrule
\end{tabular}
}
\caption{Performance on Empirical datasets. Best AUC (higher) and Spearman $\rho$ (higher) per model are bolded.}
\label{tab:empirical}
\end{table*}

\subsection{Notes and Recommendations}
\paragraph{Prompt choice matters.}
We use a single English prompt for both languages to keep training/evaluation invariant across cases; the Chinese template can be swapped in with no other changes. 
Pair-level prompting (single sequence for $(c,r)$) avoids the need for cross-encoder similarity and empirically stabilizes the scorer.

\paragraph{KDE caveat.}
Even with StandardScaler and PCA (128-d) plus an automatic bandwidth heuristic, KDE under-performs in high-dimensional spaces; results corroborate this limitation across both languages.

\paragraph{Replicability.}
All methods share \emph{the same} negatives, scorer head, optimizer, and early-stopping rule; random seed is fixed ($42$). 
We rely on vLLM for embeddings (\texttt{max\_model\_len=2048}, GPU util $0.90$) and compute AUC/Spearman with standard \texttt{sklearn} APIs.

\end{document}